\theoremstyle{plain}
\theoremstyle{definition}
\renewcommand*{\backref}[1]{}
\renewcommand*{\backrefalt}[4]{
    \ifcase #1 Not cited.
    \or        Cited on page~#2.
    \else      Cited on pages~#2.
    \fi
}
\begin{document}

\runningauthor{Doumont, Fan, Maus, Gardner, Moss, Pleiss}

\twocolumn[
  \aistatstitle{We Still Don’t Understand High-Dimensional Bayesian Optimization}

  \aistatsauthor{
    Colin Doumont\textsuperscript{$*$} \And Donney Fan \And Natalie Maus }
  \aistatsaddress{
      Tübingen AI Center \\ \href{mailto:c.doumont@uni-tuebingen.de}{\texttt{c.doumont@uni-tuebingen.de}} \And University of British Columbia \\ Vector Institute \And University of Pennsylvania }
  \aistatsauthor{
      Jacob R. Gardner \And Henry Moss \And Geoff Pleiss }
  \aistatsaddress{
      University of Pennsylvania \And Lancaster University \\ University of Cambridge \And University of British Columbia \\ Vector Institute }
  \begin{center}
    \vspace{-0.075in} \normalsize \href{https://github.com/colmont/linear-bo}{\faGithub\ \textcolor{blue}{\texttt{https://github.com/colmont/linear-bo}} } \vspace{0.3in}
  \end{center}
]

\begin{abstract}
    Existing high-dimensional Bayesian optimization (BO) methods aim to overcome the curse of dimensionality by carefully encoding structural assumptions, from locality to sparsity to smoothness, into the optimization procedure. Surprisingly, we demonstrate that these approaches are outperformed by arguably the simplest method imaginable: Bayesian linear regression. After applying a geometric transformation to avoid boundary-seeking behavior, Gaussian processes with linear kernels match state-of-the-art performance on tasks with 60- to 6,000-dimensional search spaces. Linear models offer numerous advantages over their non-parametric counterparts: they afford closed-form sampling and their computation scales linearly with data, a fact we exploit on molecular optimization tasks with $>~$20,000 observations. Coupled with empirical analyses, our results suggest the need to depart from past intuitions about BO methods in high-dimensions.
\end{abstract}

\section{\bfseries\small INTRODUCTION}

High-dimensional search spaces have historically been a challenge for Bayesian optimization (BO). Classic theoretical results demonstrate that regret increases exponentially with dimensionality \citep{srinivas2010gaussian,bull2011convergence} and na\"ive BO implementations can even perform worse than random search on problems exceeding 10 dimensions \citep{wang2016bayesian, santoni2024comparison}. The most successful high-dimensional BO algorithms have exploited or imposed structural assumptions on the problem's objective function, such as \textit{additive decompositions} \citep{kandasamy2015high,gardner2017discovering,mutny2018efficient,wang2018batched}, \textit{locality} \citep{eriksson2019scalable,muller2021local,wu2023behavior}, \textit{geometric warpings} \citep{oh2018bock,kirschner2019adaptive}, or \textit{sparsity} \citep{eriksson2021high,papenmeier2022increasing}. These approaches achieve better performance than na\"ive methods and even offer asymptotic reductions in regret under appropriate structural conditions \citep{kandasamy2015high,wu2023behavior}. Nevertheless, these algorithms are often complicated, introduce many hyperparameters, and may require many observations for meaningful optimization.

\renewcommand{\thefootnote}{\fnsymbol{footnote}}
\footnotetext[1]{Work done while at the Vector Institute.}
\renewcommand{\thefootnote}{\arabic{footnote}}

Recently, \citet{hvarfner2024vanilla} and \citet{xu2025standard} achieved breakthrough results on high-dimensional problems without any structural assumptions, relying only on Gaussian process priors that favor simple functions as dimensionality $D$ increases. Specifically, after scaling kernel lengthscale hyperparameters proportionally to $\sqrt{D}$ to encourage smoothness, they show that the decades-old ``Vanilla'' BO recipe---expected improvement using a Gaussian process with a squared-exponential kernel \citep{movckus1974bayesian,jones1998efficient}---achieves competitive performance across problems up to $D>6000$. Perhaps most remarkably, they demonstrate empirical success in a seemingly intractable regime, where the observation budget $N$ is on the same order as the search space dimensionality $D$ (which we denote by $N \approx D$). With $N \approx D$, one cannot even build a first-order Taylor approximation around a single point (i.e.\ a locally linear model), and yet Vanilla BO outperforms methods specifically engineered for high-dimensional problems.

Recognizing that true global optimization is all but impossible when $N \approx D$, we push this smoothness approach to its logical extreme. Specifically, we restrict the surrogate model to only have support for \emph{linear functions}, the smoothest functions under most mathematical definitions. With linear models, we can globally model linear functions or locally model non-linear functions, but not both. Nevertheless, we hypothesize that the simplicity of these models may be beneficial, particularly in the $N \approx D$ regime. Specifically, if observation budgets cannot support learning beyond locally-linear approximations, we hypothesize that simple Bayesian linear surrogate models are viable alternatives to non-parametric Gaussian processes. Linear models could also prove valuable in the $N \gg D$ setting: their $O(D)$ parametric representation affords closed-form sampling and $O(ND^2)$ computational complexity, whereas their non-degenerate Gaussian process counterparts require $O(N^3)$ computation and do not admit exact pathwise samples.

Standard BO references do not recommend the use of linear models, or equivalently Gaussian processes with linear kernels \citep{frazier2018bayesian,garnett2023bayesian}. Indeed, our own experiments show that linear kernels in their most na\"ive form perform extremely poorly on most high-dimensional BO tasks (see \Cref{fig:main-plot}). However, our analyses reveal that this poor performance is largely attributable to boundary-seeking behavior of linear models, which we address through simple geometric modifications to the kernel. Our most significant proposal is to bijectively map the input space (typically a hypercube in $\mathbb R^D$) onto a hypersphere in $\mathbb R^{D+1}$, yielding provable immunity from boundary-seeking behavior and resulting in a kernel that is a function of cosine similarity between pairs of inputs.

Surprisingly, while standard linear kernels fail to yield meaningful optimization, our spherically-projected linear kernels match or exceed state-of-the-art performance across benchmarks from $D=60$ to $D > 6000$. Most notably, our sphere-domain linear models excel when $N \gg D$ and exact Gaussian process inference becomes computationally prohibitive, providing a significant performance improvement over existing scalable approaches like stochastic variational Gaussian processes \citep{hensman2013gaussian,vakili2021scalable,moss2023inducing}. Beyond demonstrating the state-of-the-art performance of our spherically-projected linear kernels, especially in large-$N$ settings, these results further question our understanding of high-dimensional Bayesian optimization. The fact that the simplest possible surrogate model (after correcting for geometric pathologies) can match or outperform all other methods suggests the need for a radical departure from the conventional wisdom and theory in the field.

\section{\bfseries\small BACKGROUND \& RELATED WORK}

\subsection{Bayesian Optimization} \label{sec:BO}
The goal of Bayesian optimization \citep{garnett2023bayesian} is to find the global optimum $\mathbf{x}^* \in \mathcal{X}$ of a black-box function $f: \mathcal{X} \rightarrow \mathbb{R}$, where $f$ can only be evaluated point-wise and those evaluations are considered \say{expensive} (e.g.\ in time or money). Depending on the setting, these evaluations of $f$ can also be noisy, i.e.\ $f(\mathbf{x}) + \varepsilon_i$ with $\varepsilon_i \sim \mathcal{N}\left(0, \sigma^2_{\varepsilon}\right)$. Although $\mathcal{X}$ can represent different spaces, the typical space in BO is the unit hypercube, i.e.\ $\mathcal{X} = [0,1]^D$. In this paper, we consider the \emph{centered} hypercube $[-1, 1]^D$, which can easily be mapped to or from $[0,1]^D$.

To find $\mathbf{x^*}$, BO iteratively selects new points at which to evaluate $f$, where these points are chosen by maximizing an acquisition function $\alpha_t(\cdot)$, i.e.\ $\mathbf{x}_{t+1} = \arg \max_{\mathbf{x} \in \mathcal{X}} \alpha_t(\mathbf{x})$, which balances exploration and exploitation by relying on a (probabilistic) surrogate model of $f$, for example a Gaussian process (GP).

\paragraph{Gaussian Processes.}
GPs define distributions over functions that are specified by a mean function \(\mu(\mathbf{x})\) (typically a constant) and a covariance function or kernel \(k(\mathbf{x}, \mathbf{x'}) : \mathcal{X} \times \mathcal{X} \rightarrow \mathbb{R}\). GPs can be equivalently viewed as Bayesian linear regression with a (potentially infinite-dimensional) basis expansion, where \(k(\mathbf{x}, \mathbf{x'})\) corresponds to the inner product under this featurization. It is common to use kernels that correspond to infinite-dimensional basis expansions, such as the Radial Basis Function (RBF):
\begin{gather*}
    k_\textrm{RBF}(\mathbf{x}, \mathbf{x}') := \exp\left(-\tfrac{1}{2} \left\lVert \mathbf{z} - \mathbf{z'} \right\rVert^2\right), \\
    \mathbf{z} := \begin{bmatrix} \tfrac{x_1}{\ell_1} & \cdots  & \tfrac{x_D}{\ell_D} \end{bmatrix}, \quad
    \mathbf{z'} := \begin{bmatrix} \tfrac{x'_1}{\ell_1} & \cdots  & \tfrac{x'_D}{\ell_D} \end{bmatrix}.
\end{gather*}
Here, the lengthscales \(\ell_i > 0\) are hyperparameters that re-scale the search space \(\mathcal{X}\) before applying the kernel. Recent work has shown the necessity of scaling these lengthscales with \(\sqrt{D}\), so that the expected norms of the re-scaled inputs $\mathbf z$, $\mathbf z'$ are independent of dimensionality \citep{hvarfner2024vanilla, xu2025standard, papenmeier2025understanding}.

\paragraph{Degenerate Kernels and Linear Models.} While kernels corresponding to infinite basis expansions (e.g.\ the RBF kernel) often yield universally approximating functions, kernels corresponding to finite basis expansions have limited representational capacity.
In particular, the \emph{linear kernel}, defined as
\begin{equation}
     k_\mathrm{linear}(\mathbf{x}, \mathbf{x}') := b_0 + b_1 \mathbf{x}^\top \mathbf{x'}
     \label{eqn:std_linear_kernel}
\end{equation}
for some hyperparameters $b_0, b_1 > 0$, only corresponds to the ($D+1$)-dimensional basis expansion $[\sqrt{b_0}, \sqrt{b_1} x_1, \ldots, \sqrt{b_1} x_D]$. Similarly, polynomial kernels of the form $\left( b_0 + b_1 \mathbf{x}^\top \mathbf{x'} \right)^m$ with $m \in \mathbb N$ correspond to $O(D^m)$-dimensional basis expansions. Usually, these kernels are avoided for BO in favor of their universal approximating counterparts \citep{garnett2023bayesian}.

\subsection{High-Dimensional Bayesian Optimization} \label{sec:high-dim-BO}
Classic results show that the regret of BO with universally approximating kernels increases exponentially with $D$ \citep{srinivas2010gaussian, bull2011convergence}, and thus problems with $\geq 20$ dimensions can prove challenging for BO \citep{frazier2018bayesian}. To circumvent this curse of dimensionality, a wide range of methods have been designed specifically for high-dimensional Bayesian optimization (HDBO), often relying on complex algorithms and strong assumptions on the objective function. We keep a more detailed literature review of these methods for \Cref{app:hdbo}, but present the main high-level ideas in this paragraph: common approaches either impose structural assumptions on the objective function, such as an additive decomposition over the input features \citep[e.g.][]{kandasamy2015high} or sparsity over projected features \citep[e.g.][]{papenmeier2023bounce}, or reconfigure the BO algorithm to perform local rather than global optimization \citep[e.g.][]{eriksson2019scalable}. All of these algorithms can be seen as approaches towards reducing the complexity of the objective function or search space.

\paragraph{Dimensionality-Based Smoothness.} A recent line of work by \citet{hvarfner2024vanilla} demonstrates that neither structural assumptions nor explicitly local algorithms are necessary for successful HDBO.
Instead, they propose increasing the kernel lengthscale hyperparameters at a rate of $\sqrt D$---equivalent to re-scaling the $\mathcal X$ hypercube by $1/\sqrt{D}$---to yield an objective function prior that favors smoothness as $D$ increases. This \say{Vanilla BO} recipe, alongside contemporaneous methods with similar lengthscale scalings \citep{xu2025standard,papenmeier2025understanding}, significantly outperforms the methods described above on benchmarks with up to $D=6000$ dimensions and as few as $N = 1000$ observations.

\paragraph{Geometric Transformations.} Few works propose geometric warpings of the inputs beyond element-wise scaling through lengthscales.
A notable exception is \texttt{BOCK} \citep{oh2018bock}, who, like us, bijectively map the inputs into a different space before applying a kernel. We note several differences between this method and ours. Superficially, our proposed method maps inputs to a hypersphere, whereas \texttt{BOCK} maps to a hypercylinder. More substantially, our geometric warping is motivated to obtain meaningful performance from linear kernels, whereas \texttt{BOCK} applies this transformation to kernels with high representational capacity.

\section{\bfseries\small METHOD} \label{sec:method}
The recent works of \citet{hvarfner2024vanilla} and \citet{xu2025standard} have demonstrated the necessity of scaling lengthscales with dimensionality, effectively creating a prior that favors increasing smoothness as dimensionality increases. In this work, we investigate if further improvements are possible if we take this smoothness idea to its logical extreme. Importantly, we cannot simply consider a faster rate of lengthscale scaling, as this would lead to $\Vert \mathbf{x}/\ell - \mathbf{x'}/\ell \Vert \to 0$ as $D \to \infty$, effectively leading to a prior over constant functions. Instead, we turn to what is (under most definitions) the smoothest model of non-constant functions: Gaussian processes with linear kernels.

The idea of using linear kernels (i.e.\ Bayesian linear regression) for HDBO may appear surprising, especially given that standard BO references discourage its use \citep{shahriari2015taking, garnett2023bayesian}: the black-box objective functions modeled during BO are often of unknown complexity, necessitating non-parametric and universal approximating surrogates rather than simple parametric models. However, we argue that the limited fidelity of linear models is not a limitation in $N \approx D$ settings, where there are barely enough observations to ``fill up'' the capacity of linear models. Instead, we hypothesize that linear kernels can be competitive for HDBO tasks after accounting for pathologies in the optimization geometry.

Below, we propose a modified linear kernel for Bayesian optimization. Like the standard linear kernel, our proposed kernel is rank $O(D)$ and yields an extremely simple prior over functions. However, as we will demonstrate, our kernel significantly surpasses the BO performance of the standard linear kernel and even rivals the most sophisticated methods.

\begin{figure}[t!]
    \centering
    \includegraphics[width=0.83\columnwidth]{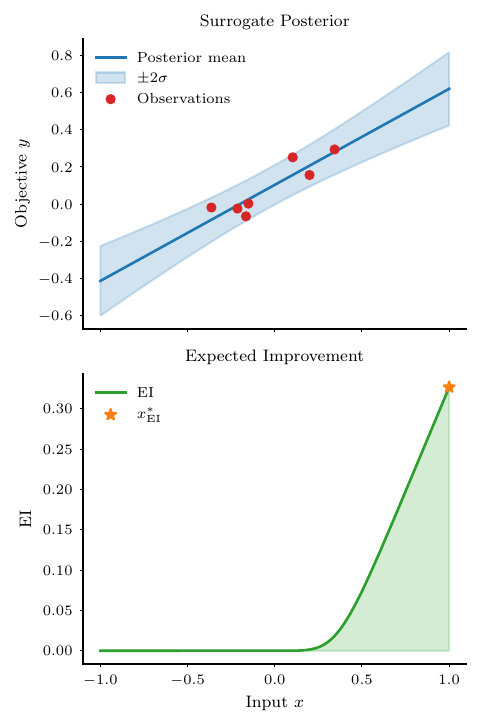}
    \caption{\textbf{Boundary-seeking behavior of linear models (\Cref{thm:boundary}).} \textit{(Top)} A Bayesian linear model's posterior mean and variance strictly increase away from the origin, i.e.\ $x=0$. \textit{(Bottom)} This causes most well-known acquisition functions (e.g.\ expected improvement) to be maximized exactly at the boundaries, discarding most of the search space. We resolve this pathology via a spherical projection of the inputs, allowing simple linear models to achieve state-of-the-art performance on high-dimensional problems.}
    \label{fig:1d_viz}
\end{figure}

\subsection{Proposed Linear Model} \label{sec:prop-lin-model}
We propose the following linear kernel for GPs (equivalent to Bayesian linear regression):
\begin{equation} \label{eq:lin-kernel}
    k_\textrm{linear}\left(\mathbf{x}, \mathbf{x'}\right) := b_0 + b_1 \; P\left(\mathbf{z}\right)^{\top} P\left(\mathbf{z'}\right),
\end{equation}
where $b_i > 0$ are hyperparameters, $P: \mathbb{R}^D \rightarrow \mathbb{S}^D$ is a bijective mapping onto the unit sphere, and $\mathbf{z},\mathbf{z'}$ are scaled versions of $\mathbf{x}, \mathbf{x'}$, i.e.\
\begin{equation} \label{eq:glob-l}
    \mathbf{z} := \tfrac{1}{a} \begin{bmatrix} \tfrac{x_1}{\ell_1} & \cdots  & \tfrac{x_D}{\ell_D} \end{bmatrix}, \; \; \mathbf{z'} := \tfrac{1}{a} \begin{bmatrix} \tfrac{x'_1}{\ell_1} & \cdots  & \tfrac{x'_D}{\ell_D} \end{bmatrix}
\end{equation}
for hyperparameters $a \in \mathbb R^+$, $\boldsymbol \ell \in \mathbb R^D$. Of all the modifications, the spherical mapping is the most impactful (confirmed by ablations in \Cref{app:other-abl.}). We now define each of these changes more rigorously.

\paragraph{Spherical Mapping.}
Linear models induce a boundary-seeking behavior during optimization, which we hypothesize is the primary factor affecting their performance. Specifically, the posterior mean and variance of linear models increase away from the center of the hypercube, resulting in maximal acquisition values on the search space boundary (implicitly discarding most of $\mathcal X$). We formalize this idea in the following theorem (see \Cref{app:boundary} for a proof), and provide a visualization for $D = 1$ in \Cref{fig:1d_viz}.

\begin{restatable}{thm}{boundary} \label{thm:boundary}
    For acquisition functions increasing in posterior mean and variance (e.g.\ expected improvement), Bayesian linear models will maximize acquisition on the boundary of the search space. That is, for $\mathbf{x}_{t+1} = \arg \max_{\mathbf{x} \in [-1, 1]^D} \alpha_t(\mathbf{x})$ at any timestep $t$, we have that $\mathbf{x}_{t+1}$ contains at least one dimension equal to $-1$ or $1$ (i.e.\ $\|\mathbf{x}_{t+1}\|_\infty = 1$).
\end{restatable}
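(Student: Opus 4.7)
The plan is to show that $\alpha(\mathbf{x})$ is a convex function of $\mathbf{x}$ on the hypercube $[-1,1]^D$ and then appeal to Bauer's maximum principle: the maximum of a convex function over a compact convex set is attained at an extreme point. The extreme points of $[-1,1]^D$ are exactly the vertices $\{-1,1\}^D$, each of which satisfies $\|\mathbf{x}\|_\infty = 1$, so this suffices.

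First I would write the closed-form posterior under the kernel $k(\mathbf{x},\mathbf{x}') = b_0 + b_1\mathbf{x}^\top\mathbf{x}'$. The posterior mean takes the affine form $\mu(\mathbf{x}) = \bar m + \bar{\mathbf{w}}^\top\mathbf{x}$, and the posterior variance takes the form $\sigma^2(\mathbf{x}) = \mathbf{x}^\top A\mathbf{x} + \mathbf{c}^\top\mathbf{x} + c_0$, where $A$ is positive semidefinite because it coincides with the weight-space posterior covariance conjugated by the inclusion map. Completing the square and using $\sigma^2 \geq 0$ pointwise gives $\sigma^2(\mathbf{x}) = \|L^\top(\mathbf{x}-\mathbf{x}_0)\|^2 + \delta$ with $\delta \geq 0$, so $\sigma(\mathbf{x}) = \|(L^\top(\mathbf{x}-\mathbf{x}_0),\sqrt{\delta})\|_2$ is convex in $\mathbf{x}$ as a norm composed with an affine map. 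Thus $\mu$ is affine and $\sigma$ is convex on $[-1,1]^D$.

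Next I would verify that expected improvement is jointly convex and coordinate-wise non-decreasing in $(\mu,\sigma)$. Writing $\mathrm{EI}(\mu,\sigma) = (\mu-f^*)\Phi(z) + \sigma\phi(z)$ with $z=(\mu-f^*)/\sigma$, direct differentiation gives $\partial_\mu\mathrm{EI} = \Phi(z) > 0$, $\partial_\sigma\mathrm{EI} = \phi(z) > 0$, and the $(\mu,\sigma)$-Hessian simplifies to the rank-one PSD matrix $(\phi(z)/\sigma)\,(1,-z)^\top(1,-z)$. By the standard composition rule—if $h:\mathbb{R}^2\to\mathbb{R}$ is convex and coordinate-wise non-decreasing and $f_1,f_2$ are convex, then $h(f_1,f_2)$ is convex—the acquisition $\alpha(\mathbf{x}) = \mathrm{EI}(\mu(\mathbf{x}),\sigma(\mathbf{x}))$ is convex on $[-1,1]^D$, and Bauer's principle finishes the argument. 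The same template extends to any other acquisition increasing in $\mu$ and $\sigma^2$ once joint convexity in $(\mu,\sigma)$ is established (e.g., UCB, where convexity is trivial). The main obstacle I anticipate is the Hessian simplification for EI—routine algebra but easy to mis-simplify in the mixed partial—and the step that promotes convexity from $\sigma^2$ to $\sigma$, which specifically requires recognizing the posterior variance as a squared norm shifted by a non-negative constant rather than an arbitrary PSD quadratic; the remaining ingredients (the closed-form posterior, the composition rule, and the extreme-point theorem) are standard.
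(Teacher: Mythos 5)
Your proposal is correct, but it takes a genuinely different route from the paper's. The paper argues by monotonicity along rays: writing $\mathbf{x} = \mathbf{x}_0 + c\,\mathbf{z}$ (with $\mathbf{x}_0$ the completed-square center of the posterior variance, and $\mathbf{z}$ flipped so that $\mathbf{z}^\top\hat{\boldsymbol\beta} \ge 0$), both $\mu$ and $\sigma$ are nondecreasing in $c$, so any interior point is dominated by pushing outward until the boundary is hit; this uses nothing beyond the acquisition being increasing in $(\mu,\sigma)$, i.e.\ exactly the hypothesis of the theorem. You instead establish convexity of $\mathbf{x}\mapsto\alpha(\mathbf{x})$ --- affine $\mu$, convex $\sigma$ via the same completing-the-square step plus the norm-of-an-affine-map observation, and joint convexity with coordinatewise monotonicity of EI in $(\mu,\sigma)$ (your rank-one Hessian $\tfrac{\phi(z)}{\sigma}(1,-z)^\top(1,-z)$ is correct) --- and then invoke Bauer's maximum principle. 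The trade-off: your argument needs joint convexity of the acquisition in $(\mu,\sigma)$, which is \emph{not} implied by monotonicity alone (it holds for EI and trivially for UCB, but fails e.g.\ for probability of improvement), so it covers a narrower class than the paper's ray argument; in exchange it delivers a stronger conclusion --- a maximizer at a \emph{vertex} of $[-1,1]^D$, not merely a boundary point --- which nicely matches the corner-seeking behavior reported empirically in \Cref{sec:analysis}. Two small points to tighten: (i) completing the square requires $S_{0x}\in\mathrm{range}(S_{xx})$, which holds automatically because the full posterior covariance block matrix is PSD (the paper's proof implicitly relies on the same fact); (ii) Bauer's principle only guarantees that \emph{some} maximizer is an extreme point, while the statement asserts every argmax touches the boundary, so to exclude interior maximizers you need the same nondegeneracy caveat the paper invokes ($\hat{\boldsymbol\beta}\neq\mathbf{0}$): a convex function attaining its maximum at an interior point of the cube would be constant on it, contradicting $\partial_\mu\mathrm{EI}=\Phi(z)>0$ together with a nonconstant $\mu$.
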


However, when points lie on the unit hypersphere $\mathbb{S}^{D} \subset \mathbb{R}^{D+1}$ instead of the hypercube $[-1, 1]^D \subset \mathbb{R}^D$, the acquisition function cannot increase by scaling inputs, since all points are equally far from the origin (i.e.\ on a sphere) and thus \Cref{thm:boundary} does not apply (see counterexample in \Cref{app:no-boundary}). Thus, we propose mapping our inputs $\mathbf x$ from $\mathcal X$ to a unit sphere (via $P$ in Equation~\ref{eq:lin-kernel}) before applying the linear kernel.

While numerous mappings from $\mathbb R^D \to \mathbb S^D$ exist, we choose $P$ to be the \emph{inverse stereographic projection}:
\begin{equation} \label{eq:inv-stereo}
    P(\mathbf{z}) := \tfrac{1}{||\mathbf{z}||^2 + 1} \begin{bmatrix}2 z_1 & \cdots  & 2 z_D & ||\mathbf{z}||^2 - 1\end{bmatrix}.
\end{equation}
As can be seen from the equation, the projection scales all dimensions of the original $D$-dimensional vector by their norm (with some constants), and only introduces one additional dimension, resulting in $D+1$ features for our linear model. This mapping possesses a number of attractive characteristics: it is smooth, bijective over $\mathbb R^D$, and conformal, meaning that it preserves angles at which curves meet \citep{hilbert2021geometry}. Most notably, it is a (near) identity mapping for unit norm $\mathbf z$ (i.e.\ $P(\mathbf z) = [\mathbf z, 0]$ when $\Vert \mathbf z \Vert = 1$), a fact we will explore in depth in \Cref{sec:analysis}.

\paragraph{Decoupled Magnitude and Direction for $\ell_i$.}
To ensure a meaningful distance is maintained between all points as the dimensionality $D$ grows, recent methods have scaled the lengthscales $\ell_i$ by $\sqrt{D}$, either through the hyperprior \citep{hvarfner2024vanilla,xu2025standard} or the initialization value \citep{papenmeier2025understanding}. As a simplified alternative to impose this scaling, we decouple the magnitude and direction of the lengthscale vector. Specifically, we introduce a \emph{global} lengthscale $a$ in \Cref{eq:glob-l}, initialized to be $O(\sqrt{D})$, which we multiply by an (unconstrained) vector $\boldsymbol \ell$ that models varying sensitivities of each dimension. In the context of our kernel, the $O(\sqrt D)$ scaling on $a$ ensures that the entries of $\mathbf z$ are $O(1/\sqrt{D})$, ensuring our scaled points have $O(1)$ norm before projecting them onto the unit sphere. We place a log-normal prior on the vector $\boldsymbol \ell$, namely $\ell_i \sim \mathcal{LN}\left(\sqrt{2}, \sqrt{3}\right)$, though we find that performance is largely unaffected by this choice (see \Cref{app:hyperpriors}).

\paragraph{No Implicit Outputscale.}
A useful property of RBF kernels is that they are bounded above, i.e.\ $k(\mathbf x, \mathbf x') \leq 1$, achieving the maximum if and only if $\mathbf x = \mathbf x'$. The latter property does not hold for standard linear kernels; however, it is naturally enforced for our spherically-projected inputs. Since $P$ bijectively maps to the unit hypersphere, we have $P(\mathbf{z})^\top P(\mathbf{z'}) \leq 1$ and thus $k_\mathrm{linear}(\mathbf x, \mathbf x') \leq b_0 + b_1$ with equality if and only if $\mathbf x = \mathbf x'$. Inspired by \citet{hvarfner2024vanilla}, we further enforce that $b_0 + b_1 = 1$, removing any implicit outputscale that could affect optimization. Since $b_0, b_1 > 0$, we enforce this constraint by learning unconstrained parameters, and then mapping them to the simplex through the softmax function.

\subsection{Polynomial Extension}
We can naturally extend the logic of our linear kernel to higher-order polynomial kernels via:
\begin{equation} \label{eq:poly-kernel}
    k_\textrm{poly}\left(\mathbf{x}, \mathbf{x'}\right) := {\textstyle \sum_{i=0}^m b_i \left[P\left(\mathbf{z}\right)^{\top} P\left(\mathbf{z'}\right) \right]^i},
\end{equation}
where $m \in \mathbb N$ indicates the polynomial order. We again constrain the coefficients $b_i$ to the simplex through a softmax function. Kernels of this form are trivially positive semi-definite through a standard application of kernel composition rules \citep[e.g.][]{genton2001classes}.

\Cref{eq:poly-kernel} can approximate many kernels on $P(\mathbf z), P(\mathbf z') \in \mathbb{S}^D$ to an arbitrary degree of precision. \citet{schoenberg1942positive} proves that any dot-product kernel on the unit sphere (i.e.\ where the kernel is a function of $P(\mathbf z), P(\mathbf z')$) takes the form of \Cref{eq:poly-kernel} for some value of $m$. Moreover, any kernel that is a function of Euclidean distance (e.g.\ the Mat\'ern or rational quadratic kernels) can be approximated by \Cref{eq:poly-kernel} for inputs $P\left(\mathbf{z}\right), P\left(\mathbf{z'}\right) \in \mathbb{S}^{D}$ (see \Cref{app:isotropic-dot-product}). For example, \Cref{app:taylor-exp} shows that the RBF kernel restricted to the unit sphere has the Taylor expansion
\begin{equation} \label{eq:taylor-exp}
    k_\textrm{RBF}\left(P\left(\mathbf{z}\right), P\left(\mathbf{z'}\right)\right) = \sum_{i=0}^{\infty} \frac{1}{i!\,e} \left[P\left(\mathbf{z}\right)^{\top} P\left(\mathbf{z'}\right)\right]^i.
\end{equation}
Therefore, increasing $m$, our proposed model can approximate the RBF kernel (on a hypersphere) with any arbitrary precision. Surprisingly, as will become clear in \Cref{sec:ablations}, there is rarely a need to go beyond $m=1$ to match state-of-the-art HDBO performance.

\subsection{Advantages of Linear Models}
Before considering optimization performance, it is worth remarking on the practical benefits afforded by our kernel (or any kernel corresponding to an $O(D)$ feature expansion), where these benefits cannot be obtained by standard (e.g.\ RBF or Mat\'ern) kernels.

\paragraph{Computational Scalability.}
Since GPs with linear kernels correspond to an $O(D)$ feature map, we can perform exact posterior inference over the $O(D)$-dimensional parameters in $O(ND^2)$ time \citep{williams2006gaussian}. In contrast, RBF and Mat\'ern kernels, which correspond to infinite-dimensional basis expansions, require $O(N^3)$ computation for exact posterior inference. As a result, linear kernels can be used to scale to optimization problems where $N \gg D$ (e.g.\ molecular optimization in \Cref{sec:n>>d}), whereas standard kernels require posterior approximations.

\begin{figure*}[t!]
    \centering
    \begin{minipage}[t]{0.74\textwidth}
        \centering
        \includegraphics[width=\textwidth]{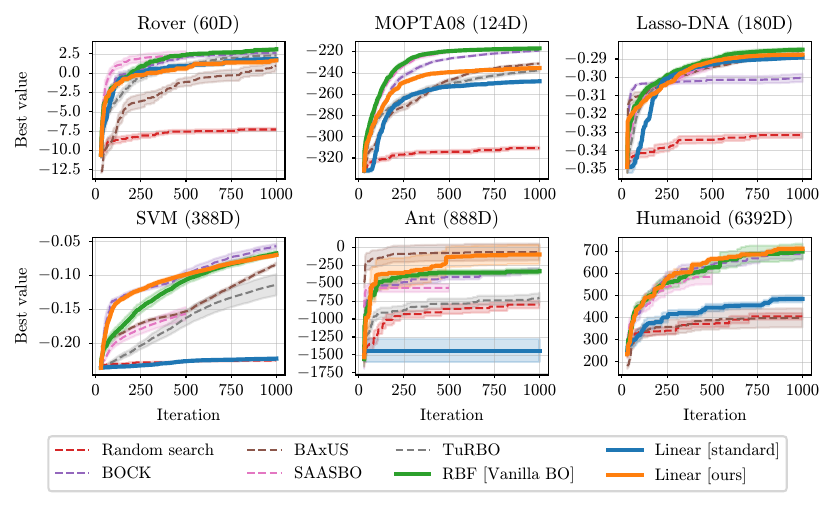}
    \end{minipage}
    \vrule
    \begin{minipage}[t]{0.24\textwidth}
        \centering
        \includegraphics[width=\textwidth]{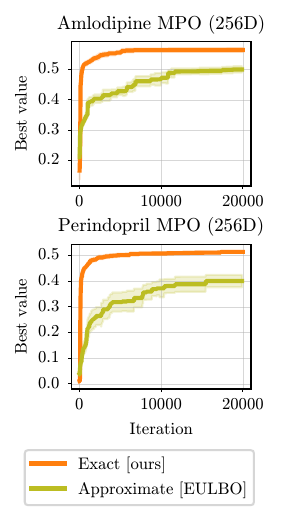}
    \end{minipage}
    \caption{{\bf Our linear kernel on spherically-mapped inputs matches state-of-the-art high-dimensional BO performance.} (Left): benchmarks with $N \approx D$ evaluation budgets. While standard linear kernels can fail to make any optimization progress, our modified kernel matches or exceeds competitive methods. (Right): benchmarks with $N \gg D$.
    The natural scalability of linear kernels, coupled with the improved optimization performance afforded by our spherical mapping, yields new state-of-the-art results on large-$N$ tasks.
    }
    \label{fig:main-plot}
\end{figure*}

\paragraph{Exact Thompson Sampling.}
Thompson sampling \citep{thompson1933likelihood}, a common acquisition function for high-dimensional problems \citep[e.g.][]{eriksson2019scalable} and parallel acquisitions \citep[e.g.][]{hernandez2017parallel}, requires sampling a realization of the GP posterior. While standard kernels necessitate approximations or discretizations of $\mathcal X$ for sampling, linear kernels yield exact posterior function samples. Functions under our linear model posterior can be represented as $f(\mathbf x) = \theta_0 + \boldsymbol \theta^\top P(\mathbf z)$, where $\theta_0$ and $\boldsymbol \theta$ are (finite-dimensional) Gaussian random variables. Hence, we can exactly sample functions by sampling $\theta_0$ and $\boldsymbol \theta$.

\section{\bfseries\small EXPERIMENTS} \label{sec:experiments}
To showcase the performance and scalability of our proposed linear kernel, we evaluate it on a range of optimization problems (both $N \approx D$ and $N \gg D$). Moreover, we ablate the design choices that differentiate our kernel from the standard linear kernel.

\paragraph{Benchmarks.} To test performance in the $N \approx D$ setting, we evaluate our method on a range of standard benchmarks from the HDBO literature ranging from $D = 60$ all the way to $D = 6392$: \texttt{Rover} \citep[60D;][]{wang2018batched}, \texttt{MOPTA08} \citep[124D;][]{eriksson2021high}, \texttt{Lasso-DNA} \citep[180D;][]{nardi2022lassobench}, \texttt{SVM} \citep[388D;][]{eriksson2021high}, \texttt{Ant} \citep[888D;][]{wang2020learning}, and \texttt{Humanoid} \citep[6392D;][]{wang2020learning}.
We evaluate these tasks under a budget of $N=1000$ observations. For the $N \gg D$ setting, we benchmark on multiple molecular tasks from the \texttt{GuacaMol} benchmark \citep{brown2019guacamol}, which typically require large optimization budgets for meaningful progress. We perform BO in the 256D latent space of a pre-trained variational auto-encoder \citep{maus2022local, maus2024approximation} with a $N = 20,\!000$ budget.

\paragraph{Baselines.} For the $N \approx D$ tasks, we compare against popular HDBO methods: \texttt{BOCK} \citep{oh2018bock}, \texttt{TuRBO} \citep{eriksson2019scalable}, \texttt{SAASBO} \citep{eriksson2021high}, \texttt{BAxUS} \citep{papenmeier2022increasing}, and \say{Vanilla BO} \citep{hvarfner2024vanilla}, as well as a standard linear kernel as in \Cref{eqn:std_linear_kernel}. The $N \gg D$ tasks necessitate the use of scalable GP approximations (except in the case of linear kernels), and therefore we compare against the \texttt{EULBO} method \citep{maus2024approximation}, which relies on variational approximations. For the \say{random search} baseline, we perform quasi-random Sobol sampling \citep{sobol1967distribution} of $\mathcal X$.

\paragraph{Experimental Set-Up.} Our set-up largely follows \citet{hvarfner2024vanilla}: we use the \texttt{LogEI} acquisition function \citep{ament2023unexpected}, initialize all methods with 30 quasi-random observations ($100$ points for $N \gg D$ tasks), and follow their other modeling protocols. We run each method for at least 10 different random seeds, and the plots in this paper display the mean and standard error of the mean (with respect to the seeds) as a solid/dotted line and shaded area, respectively. Details of our experimental set-up can be found in \Cref{app:implementation-details}.

\subsection{$N \approx D$ Optimization Problems} \label{sec:n=d}
\Cref{fig:main-plot} (left) demonstrates optimization performance of all methods on $N \approx D$ tasks. As expected, the standard linear kernel fails to optimize many benchmarks, though it obtains decent performance on the lower-dimensional tasks. In contrast, our modified linear model matches the current state-of-the-art on all $N \approx D$ benchmarks. In fact, the optimization trajectories from our linear kernel and Vanilla BO are statistically indistinguishable on the \texttt{Lasso-DNA} and \texttt{Humanoid} benchmarks. We note the stark performance difference between the modified and standard linear kernels on the \texttt{SVM} and \texttt{Ant} datasets. Moreover, in \Cref{app:guacamol}, we compare against Vanilla BO on a number of latent-space molecular tasks with a similar observation budget. Spherical linear kernels strongly outperform Vanilla BO on all these benchmarks, potentially suggesting our method is particularly advantageous for latent-space BO.

\subsection{$N \gg D$ Optimization Problems} \label{sec:n>>d}
For $N \gg D$ molecular tasks, we observe significant performance gains over the scalable \texttt{EULBO} method (\Cref{fig:main-plot}, right). We hypothesize that our linear kernel, unencumbered by the need for approximations, is better suited for large-$N$ tasks, despite its lack of representational capacity.

\begin{figure}[t!]
    \centering
    \includegraphics[width=\columnwidth]{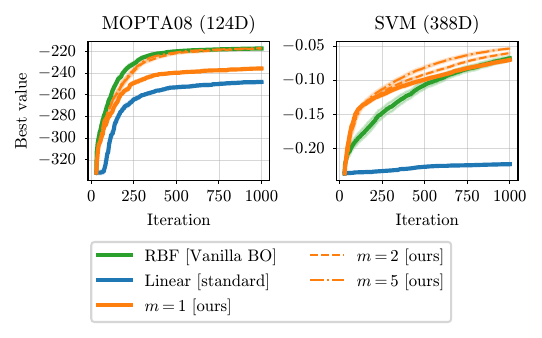}
    \caption{Extension of our spherical linear kernel to order-$m$ polynomial kernels.
    Higher-order polynomials do not improve much upon our linear model ($m=1$). See \Cref{fig:poly-full} for additional datasets.}
    \label{fig:poly-small}
\end{figure}

\subsection{Ablation Studies} \label{sec:ablations}
\paragraph{Higher-Order Polynomials.}
In \Cref{fig:poly-small} (and its extended version, \Cref{fig:poly-full}, in \Cref{app:poly}), we replicate the $N \approx D$ benchmark results for order-$m$ polynomial kernels of the form in \Cref{eq:poly-kernel}. (Recall that our proposed linear kernel is the special case of this form for $m=1$.) Surprisingly, the polynomial order has almost no effect on optimization performance, with our linear kernel matching the $m=5^\mathrm{th}$ order polynomial on nearly all benchmarks. From a practical perspective, higher order polynomials are net detrimental, as they lose scalability and exact Thompson sampling (the $O(D^m)$ feature representation of these kernels is too large to afford these benefits), without improving optimization performance.

\paragraph{Choice of Spherical Mapping.}
Our biggest modification to the standard linear kernel is the spherical mapping (see \Cref{sec:prop-lin-model}), i.e.\ the inverse stereographic projection defined in \Cref{eq:inv-stereo}. In \Cref{fig:spher-proj-small} (and its extended version, \Cref{fig:spher-proj-full}, in \Cref{app:spher-proj}), we compare a range of other common mappings from $\mathcal X$ to the hypersphere (see \Cref{app:spher-proj} for descriptions). We also compare against the case of no projection (None), but where we keep our other proposed modifications to the linear kernel (decoupled lengthscales and no implicit outputscale). While we can conclude that a spherical projection is necessary to improve performance, the choice of spherical projection itself can also have a drastic impact on performance.

We note that the inverse stereographic projection, which consistently outperforms all other mappings, is the only mapping that does not modify unit-norm inputs (i.e.\ $P(\mathbf z) = [\mathbf z, 0]$ for all $\Vert \mathbf z \Vert = 1$). We hypothesize that this property may be instrumental in its performance (see \Cref{sec:analysis}). Interestingly, while a spherical mapping is crucial for linear BO, it has almost no effect for non-parametric models. In \Cref{app:rbf-sphere}, we demonstrate that Vanilla BO performance is largely unchanged under the inverse stereographic projection.

\paragraph{Other Ablations.}
In \Cref{app:other-abl.}, we ablate over our other modifications (decoupled lengthscale, centering of the search space, implicit outputscale, etc.), showing that our decoupled parameterization makes our kernel less sensitive to any prior placed on the lengthscale vector $\boldsymbol \ell$. In contrast, priors on standard (coupled) lengthscales have a significant impact on BO performance \citep{hvarfner2024vanilla}.

\begin{figure}[t!]
    \centering
    \includegraphics[width=\columnwidth]{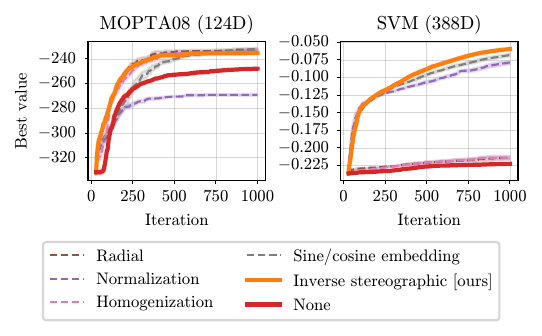}
    \caption{Ablation over the spherical mapping function used by our modified linear kernel. While most spherical mappings improve upon the unmodified inputs (red line), the inverse stereographic projection (orange line) outperforms all other mappings. See \Cref{fig:spher-proj-full} for additional datasets.}
    \label{fig:spher-proj-small}
\end{figure}

\begin{figure*}[t!]
    \centering
    \includegraphics[width=\textwidth]{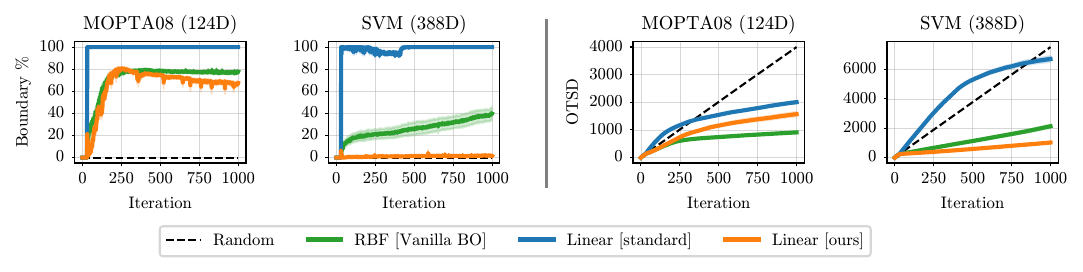}
    \caption{
    {\bf Differences in the acquired inputs of standard versus modified linear models.}
    (Left): the ``boundary \%'' depicts, for each acquisition $\mathbf x_t$, how many dimensions of the vector lie on the boundary of $\mathcal X$ (i.e. $\pm 1$).
    Standard linear models lead to acquisitions with points nearly in the corners of the hypercube (i.e.\ $100\%$ of dimensions on the boundary).
    Our linear model acquires non-corner points ($\approx 75\%$ boundary on \texttt{MOPTA08}) and interior points ($\approx 0\%$ boundary on \texttt{SVM}).
    See \Cref{fig:boundary-full}, \Cref{app:boundary-plot} for additional datasets.
    (Right): length of the shortest path connecting all acquired observations, or Observation Traveling Salesman Distance (OTSD).
    The corner-searching behavior of the standard linear model leads to acquisitions that are spread out over $\mathcal X$, whereas our modified linear model yields locality that is comparable to RBF-based models.
    See \Cref{fig:otsd-full}, \Cref{app:otsd} for additional datasets.
    }
    \label{fig:otsd-boundary}
\end{figure*}

\section{\bfseries\small ANALYSIS} \label{sec:analysis}
By conventional wisdom, linear models should not work as well as they do in Section~\ref{sec:experiments}. Their success challenges our understanding of what makes effective high-dimensional surrogate models. While our analysis cannot resolve this puzzle, it reveals a tension that may be central to explaining their efficacy. Reconciling the following contradictory observations highlights the non-intuitive nature of HDBO, suggesting that our understanding is fundamentally incomplete:
\begin{enumerate}
    \item after applying our spherical mapping, the linear kernel exhibits BO behavior that resembles more traditional non-parametric GPs; and yet
    \item the spherical mapping is geometrically minor for most points in high dimensional spaces, preserving local structure except at the boundary.
\end{enumerate}

\subsection{Spherical Linear Kernels Avoid the Pathologies of Standard Linear Kernels}
Standard linear kernels suffer from a pathological limitation in BO: as we prove in \Cref{sec:prop-lin-model}, they exclusively acquire points on the boundary of $\mathcal{X}$, implicitly discarding the interior of the search space. We empirically validate this in \Cref{fig:otsd-boundary} (left), which plots the percentage of entries of $\mathbf{x}_t$ equal to $\pm 1$ for each BO iteration $t$ under various kernels ($0\%$ indicates an interior point and $100\%$ indicates a corner). On the \texttt{MOPTA08} and \texttt{SVM} tasks, we observe that the standard linear kernel acquires points with $100\%$ of dimensions equal to $\pm 1$, i.e.\ corners of $\mathcal X$. This represents an even more extreme pathology than simple boundary-seeking: the model restricts itself to the $2^D$ corners, a measure-zero subset of both the boundary and the search space (i.e.\ a finite instead of infinite set of points).

In contrast, RBF kernels acquire points with $\leq 50\%$ of entries equal to $\pm 1$. While these points still lie on the boundary, they are less extreme and cover a broader region of $\mathcal{X}$. Our spherically-mapped linear kernel exhibits similar behavior to the RBF kernel, and even yields points entirely in the interior ($0\%$ of $\pm 1$ elements) on the \texttt{SVM} task. (See \Cref{app:boundary-plot} for plots on additional datasets.) Again, the spherical mapping prevents the monotonic growth in posterior statistics that drives standard linear models toward extremal points (see counterexample in \Cref{app:no-boundary}).

The Observation Traveling Salesman Distance \citep[OTSD;][]{papenmeier2025exploring} measures the minimum path length connecting all acquired points, where lower OTSD values suggest more local search. In \Cref{fig:otsd-boundary} (right), standard linear kernels yield large OTSD on \texttt{MOPTA08} and \texttt{SVM}—often exceeding random search—because corner points (i.e.\ 100\% of dimensions on the boundary) are maximally far apart. In contrast, both our spherically-projected linear kernel and RBF kernels yield similarly low OTSD values, suggesting more local search strategies. (See \Cref{app:otsd} for plots on additional datasets.) Collectively, these results provide preliminary evidence that spherically-mapped linear kernels exhibit exploration patterns more similar to RBF kernels than to their unmapped linear counterparts, though complete characterization remains future work.

\subsection{Spherical and Standard Linear Kernels are Asymptotically The Same}
While the spherical mapping $P: \mathbb{R}^D \rightarrow \mathbb{S}^D$ is crucial for avoiding boundary-seeking pathology, it is a relatively minor transformation for most points in $\mathcal{X}$. The inverse stereographic projection is an identity mapping for unit-norm inputs: $P(\mathbf{z}) = [\mathbf{z}, 0]$ for all $\|\mathbf{z}\| = 1$. When lengthscales are set to $\sqrt{3/D}$ as suggested by recent work \citep{hvarfner2024vanilla, xu2025standard}, any $\mathbf{x}$ drawn uniformly from $\mathcal{X}$ yields a scaled vector $\mathbf{z} = [x_1/\ell_1, \ldots, x_D/\ell_D]$ with $\|\mathbf{z}\| \to 1$ as $D \to \infty$—a direct consequence of the law of large numbers (see \Cref{app:gaussian-annulus}). Thus, for most points in $\mathcal{X}$, the spherically-mapped linear model behaves nearly identically to a standard linear model and has no additional representational capacity.

Moreover, we emphasize that the inverse stereographic projection transforms the $D$-dimensional function space into a $(D+1)$-dimensional function space, where this one additional dimension represents less than a 1\% increase in $D$ for most considered benchmarks. In contrast, polynomial kernels rely on a $\mathcal{O}(D^m)$-dimensional function space at a minimum, and RBF or Mat\'ern kernels rely on an $\infty$-dimensional function space. As a result, although our linear model is \textit{technically} not linear in $\mathbf{x}$ (though it is in $P(\mathbf{x})$), we argue it is \textit{approximately} linear in $\mathbf{x}$.

To validate that the spherical mapping does not meaningfully increase model expressiveness on a typical regression task, we fit both models to $400$ points generated by a Sobol sequence over $\mathcal{X}$ and evaluate predictions on $100$ held-out Sobol points (concentrated around unit norm after scaling). In \Cref{fig:boxplot} (left), both the spherically-mapped and standard linear models achieve nearly identical predictive RMSE on most benchmarks, while the RBF model generally achieves lower error due to its higher representational capacity (see \Cref{fig:boxplot-full}, \Cref{app:boxplot} for results on other benchmarks).

\subsection{Model Expressiveness Does Not Predict BO Performance}
This similarity on standard regression tasks makes the performance gap in BO (see \Cref{fig:main-plot}) all the more puzzling: if the spherical and standard linear models have similar predictive capabilities, why do spherical mappings improve optimization so dramatically? The answer is that model expressiveness and predictive accuracy on \emph{random test points} do not necessarily translate to optimization performance with \emph{adaptively chosen data}.

\begin{figure}[t!]
    \centering
    \includegraphics[width=\columnwidth]{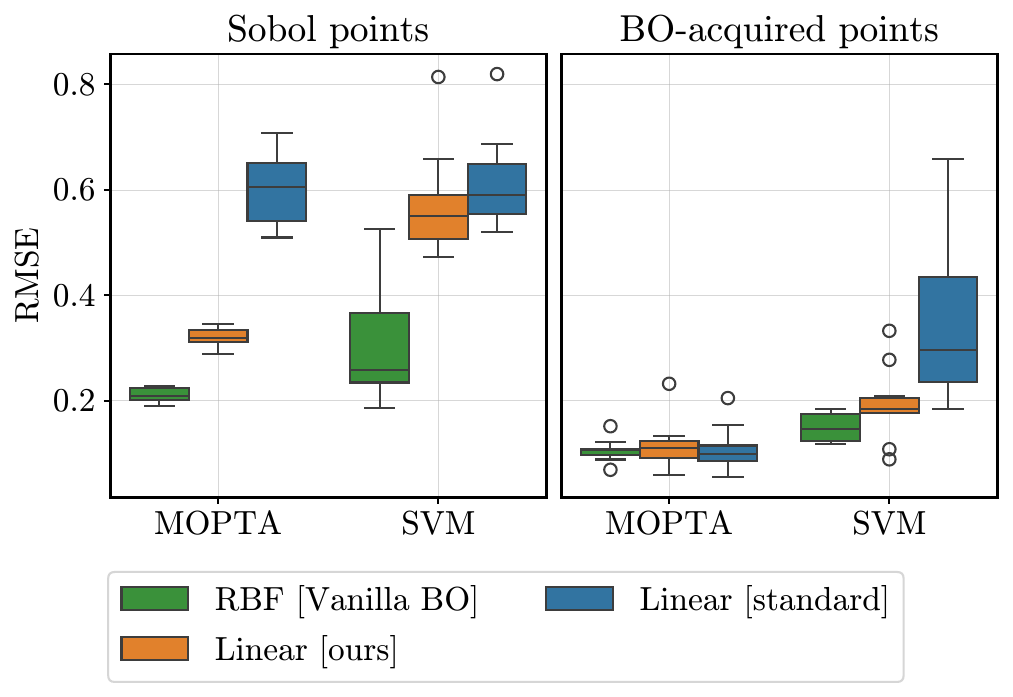}
    \caption{{\bf Spherical mappings affect BO performance, but not supervised regression performance.} (Left): predictive RMSE of GPs trained and tested on \emph{quasi-random data}. The standard and modified linear models make worse predictions than the RBF model.
    (Right): predictive RMSE on \emph{adaptively-chosen BO acquisitions}.
    Unlike with random data, linear models match the ability of RBF models for predictions at future BO acquisitions from prior ones. See \Cref{fig:boxplot-full}, \Cref{app:boxplot} for additional benchmarks.}
    \label{fig:boxplot}
\end{figure}

To further illustrate this point, \Cref{fig:boxplot} (right) repeats the above experiment on adaptively chosen BO acquisitions rather than Sobol points. We fit models to $400$ points from a BO trajectory (generated by our spherical linear model) and predict the objective values at the next $100$ acquired points. In \Cref{fig:boxplot} (right) and \Cref{fig:boxplot-full} (right), all three models—spherical linear, standard linear, and RBF—achieve similar predictive accuracy, despite their vastly different performance as BO surrogates (see \Cref{fig:boxplot-full}, right, in \Cref{app:boxplot} for results on other benchmarks). These results contrast the supervised learning setting where RBF substantially outperforms linear models on a number of problems.

Why are spherical linear models able to accurately predict future (adaptive) acquisitions but not random test data? One possible explanation is that acquisition functions like EI tend to select points locally \citep{garnett2023bayesian}, where objectives may be well-approximated by linear models (i.e.\ a first-order Taylor approximation) even when globally complex. However, fully characterizing which properties actually matter for HDBO success---beyond standard notions of expressiveness or generalization---remains an important open question.

\section{\bfseries\small CONCLUSION}
In this work, we show that linear models---arguably the simplest of all models---match state-of-the-art performance on high-dimensional BO tasks after mapping inputs to the unit sphere. From a {\bf practical perspective}, our modified linear kernel also provides state-of-the-art performance on large-$N$ tasks and promising performance on latent-space tasks. We thus recommend its adoption in these settings. From a {\bf theoretical perspective}, our findings call into question long-standing intuitions and practices. The fact that linear kernels, largely dismissed by the BO community, achieve such strong performance provides a compelling case against model complexity in high-dimensional settings. We hope that our initial analysis provides the seeds for radically new theories that unlock a new and more robust understanding of high-dimensional BO.

\subsubsection*{Acknowledgments}
Resources used in preparing this research were provided, in part, by the Province of Ontario, the Government of Canada through CIFAR, and companies sponsoring the Vector Institute. DF and GP acknowledge the support of the Natural Sciences and Engineering Research Council and the Social Sciences and Humanities Research Council of Canada (NSERC: RGPIN-2024-06405, NFRFE-2024-00830). NM was supported by NSF Graduate Research Fellowship (NSF GRFP). JRG was supported by NSF grants DBI-2400135 and IIS-2145644. HM was supported by Schmidt Sciences and Research England under the Expanding Excellence in England (E3) funding stream. GP is supported by the Canada CIFAR AI Chairs program.

\bibliography{references}

\newpage

\appendix
\onecolumn
\aistatstitle{Appendix}

\section{Proofs, examples and derivations} \label{app:proofs}

\subsection{Proof of \Cref{thm:boundary}} \label{app:boundary}
\boundary*
\begin{proof}
    Most common acquisition functions, including Expected Improvement (\texttt{EI}) and Upper Confidence Bound (\texttt{UCB}), can be shown to be monotonically increasing in the posterior mean $\mu(\mathbf{x})$ and standard deviation $\sigma(\mathbf{x})$ of the underlying surrogate. Now, consider the following proof by contradiction: the acquisition optimum $\mathbf{x}$ does not lie on the boundary. If we parametrize $\mathbf{x}$ as a direction $\mathbf{z}$ and magnitude $c \geq 0$, we get
    \begin{equation*}
        \mathbf{x} = \underbrace{\lVert \mathbf{x} \rVert}_c \cdot \underbrace{\dfrac{\mathbf{x}}{\lVert \mathbf{x} \rVert}}_{\mathbf{z}}, \quad \text{with} \quad \mu(\mathbf{x}) = \mathbf{x}^\top \hat{\bm{\beta}} = c \cdot \mathbf{z}^\top \hat{\bm{\beta}} \quad \text{and} \quad \sigma(\mathbf{x}) = \sqrt{\sigma^2_\varepsilon + \mathbf{x}^\top \mathbf{S} \mathbf{x}} = \sqrt{\sigma^2_\varepsilon + c^2 \cdot \mathbf{z}^\top \mathbf{S} \mathbf{z}}.
    \end{equation*}
    As a result, for any $\mathbf{x}$, increasing $c$ monotonically increases $\mu(\mathbf{x})$ and $\sigma(\mathbf{x})$, and therefore also $\alpha(\mathbf{x})$. Thus, the interior point $\mathbf{x}$ cannot be the (constrained) optimum, since increasing $c$ will result in a higher acquisition value, which is a contradiction and concludes the proof. 
    
    Here, we have assumed $\mathbf{x}^\top \hat{\bm{\beta}} \geq 0$, since assuming otherwise would lead to a better optimum, namely $-\mathbf{x}$, which is also a contradiction. Moreover, unless all previous data points $\mathbf{x}_{1:t}$ obtain exactly the same objective value $y$ (which is highly unlikely), we can safely assume $\hat{\bm{\beta}} \neq (0, \ldots, 0)$, meaning that $\mu(\mathbf{x})$ will be strictly increasing in $c$. As a consequence, not only will there be an optimum on the boundary, but this optimum will also be unique, meaning there is no optimum in the interior of the hypercube.

    \paragraph{Adding an Intercept} To extend the proof to a Bayesian linear model with an intercept, we follow a similar approach as before, but need to pay closer attention to the parameterization of $\mathbf{x}$. Specifically, we write $\mu(\mathbf{x}) = \hat{\beta}_0 + \mathbf{x}^\top \hat{\bm{\beta}}$ using the augmented feature map $\varphi(\mathbf{x}) = [1;\mathbf{x}]$. The posterior variance can then be expressed as 
    \begin{equation*}
        \sigma^2(\mathbf{x}) = \sigma_\varepsilon^2 + \varphi(\mathbf{x})^\top \mathbf{S}\,\varphi(\mathbf{x}) = \sigma_\varepsilon^2 + S_{00} + 2\,\mathbf{x}^\top S_{0x} + \mathbf{x}^\top S_{xx}\mathbf{x}, \quad \text{with} \quad \mathbf{S}=\begin{bmatrix} S_{00} & S_{0x}^\top \\ S_{0x} & S_{xx} \end{bmatrix}.
    \end{equation*}
    Completing the square, we choose a shift point $\mathbf{x}_0$ that removes the linear term, i.e.\ any solution of $S_{xx}\mathbf{x}_0 = -S_{0x}$ (unique when $S_{xx}$ is invertible). Then
    \begin{equation*}
        \sigma^2(\mathbf{x}) \;=\; \underbrace{\sigma_\varepsilon^2 + S_{00} - \mathbf{x}_0^\top S_{xx}\mathbf{x}_0}_{\text{constant in }\mathbf{x}} \;+\; (\mathbf{x}-\mathbf{x}_0)^\top S_{xx}(\mathbf{x}-\mathbf{x}_0),
    \end{equation*}
    where $S_{xx}\succeq 0$ is the slope block of the posterior covariance.
    
    Now, we parametrize any interior point relative to $\mathbf{x}_0$ as $\mathbf{x}=\mathbf{x}_0+c\,\mathbf{z}$ with $c\ge 0$ and $\|\mathbf{z}\|=1$. If necessary, we flip $\mathbf{z}$ so that $\mathbf{z}^\top \hat{\bm{\beta}}\ge 0$. Along this ray, we obtain
    \begin{equation*}
        \mu(\mathbf{x}) = \hat{\beta}_0 + \mathbf{x}_0^\top\hat{\bm{\beta}} + c\,\mathbf{z}^\top\hat{\bm{\beta}},
        \quad \text{and} \quad
        \sigma(\mathbf{x}) = \sqrt{A + c^2\,\mathbf{z}^\top S_{xx}\mathbf{z}},
    \end{equation*}
    with $A=\sigma_\varepsilon^2 + S_{00} - \mathbf{x}_0^\top S_{xx}\mathbf{x}_0$ independent of $c$. Because $\mathbf{z}^\top\hat{\bm{\beta}}\ge 0$ and $S_{xx}\succeq 0$, both $\mu(\mathbf{x})$ and $\sigma(\mathbf{x})$ are nondecreasing in $c$, and are strictly increasing unless simultaneously $\mathbf{z}^\top\bm{\beta}=0$ and $\mathbf{z}^\top S_{xx}\mathbf{z}=0$ (a nongeneric degeneracy).
\end{proof}

\subsection{Counterexample to \Cref{thm:boundary} (For Spherical Projections)} \label{app:no-boundary} 
We provide a specific example to support the following statement from \Cref{sec:prop-lin-model}: \say{when points are projected to the hypersphere, \Cref{thm:boundary} does not apply anymore.} That is, for
\begin{equation*}
    \mathbf{x}_{t+1} = \arg \max_{\mathbf{x} \in [-1, 1]^D} \alpha_t\left(P(\mathbf{x})\right),
\end{equation*}
where $P: \mathbb{R}^D \rightarrow \mathbb{S}^D$, the optimum $\mathbf{x}_{t+1}$ can be any point in the hypercube (i.e.\ $\|\mathbf{x}_{t+1}\|_\infty \in [0,1]$), instead of only boundary points (i.e.\ $\|\mathbf{x}_{t+1}\|_\infty = 1$) as stated in \Cref{thm:boundary}.

For simplicity, consider $D=1$ and \texttt{UCB} with an exploration factor of $\lambda = 0$, giving us
\begin{equation*}
    \alpha_t(P(x)) = P(x)^\top \hat{\bm{\beta}}, \quad \text{where we assume} \quad \hat{\bm{\beta}} =  \begin{pmatrix} \sfrac{1}{2} \\
        -1
    \end{pmatrix}.
\end{equation*}
As spherical mapping $P: \mathbb{R}^D \rightarrow \mathbb{S}^D$, we consider the inverse stereographic projection from \Cref{eq:inv-stereo}, giving us
\begin{equation*}
    \alpha(x) = \dfrac{1}{2} \cdot \dfrac{2x}{x^2+1} - \dfrac{x^2-1}{x^2+1}, \quad \text{with} \quad \alpha(-1) = -\tfrac{1}{2}, \quad \alpha\left(\tfrac{1}{2}\right) =1, \quad \alpha(1) =\tfrac{1}{2}.
\end{equation*}
Here, the interior point $x=\sfrac{1}{2}$ leads to a higher acquisition value than the two boundary points $x = -1$ and $x=1$, thus representing a clear example for which \Cref{thm:boundary} does not hold (because of the non-linear projection $P(x)$ from the real line $\mathbb{R}$ to the circle $\mathbb{S}$).

\subsection{Isotropic Kernels Become Dot-Product Kernels (After Spherical Projection)} \label{app:isotropic-dot-product}
Any isotropic kernel becomes a dot product kernel when applied to unit norm inputs.

Let $k: \mathbb R^\infty \times \mathbb R^\infty \to \mathbb R$ be an isotropic kernel, where $\mathbb R^\infty \times \mathbb R^\infty$ denotes that the kernel is valid on any possible input dimensionality. By the result of \citet[][Theorem~2]{schoenberg1938metric}, this kernel can be expressed as:
\begin{equation}
    k(\mathbf x, \mathbf x') = \int \exp\left(- \frac{\ell}{2} \left\Vert \mathbf x - \mathbf x' \right\Vert_2^2\right) d \mu(\ell)
    = \int \exp\left(- \frac{\ell \left( \Vert \mathbf x \Vert^2 + \Vert \mathbf x' \Vert^2 \right)}{2} \right) \exp\left( \ell \mathbf x^\top \mathbf x' \right) d \mu(\ell)
    \label{eqn:shoenberg_isometric}
\end{equation}
where $\mu(\ell)$ is some positive finite measure over $[0, \infty)$. Without loss of generality, we assume that $\mu$ is a probability measure---alternatively, we can normalize the measure by $1/\mu([0, \infty))$---as the value of $\mu([0, \infty))$ can be viewed as an implicit outputscale on the kernel.

Applying this kernel to sphere-projected inputs $P(\mathbf z), P(\mathbf z')$ (which are both unit norm), we have:
\begin{equation}
    k(P(\mathbf z), P(\mathbf z')) = \int \exp\left(- \ell \right) \exp\left( \ell P(\mathbf z)^\top P(\mathbf z') \right) d \mu(\ell)
    \nonumber
\end{equation}
Taking a Taylor expansion of the right exponential and applying Fubini-Tonelli, we have:
\begin{equation}
    k(P(\mathbf z), P(\mathbf z')) = \sum_{i=0}^\infty \frac{\int \ell^i \exp\left(- \ell \right) d \mu(\ell)}{i!} \left[ P(\mathbf z)^\top P(\mathbf z') \right]^i.
    \label{eqn:shoenberg_taylor}
\end{equation}
As $\ell^i \exp\left(- \ell \right)$ is trivially bounded for any $i \in \mathbb N$ and $\ell \geq 0$, the integrals in \Cref{eqn:shoenberg_taylor} are all finite and well defined. Furthermore, by \Cref{eqn:shoenberg_isometric} we have that $k(P(\mathbf z), P(\mathbf z)) = 1$ for all $\mathbf z$; combining this fact with \Cref{eqn:shoenberg_taylor} we have that
\begin{equation}
k(P(\mathbf z), P(\mathbf z)) = \sum_{i=0}^\infty \frac{\int \ell^i \exp\left(- \ell \right) d \mu(\ell)}{i!} = 1.
\end{equation}
In other words, the polynomial coefficients sum to 1, and thus $k$ restricted to the unit sphere can be arbitrarily approximated by the higher order polynomial kernel in \Cref{eq:poly-kernel}.

\subsection{Derivation of \Cref{eq:taylor-exp}} \label{app:taylor-exp}
The polynomial expression of the sphere-based RBF kernel can be derived from \Cref{eqn:shoenberg_isometric,eqn:shoenberg_taylor}, under the atomic measure $d\mu(\ell) = \delta(\ell - 1)$, where $\delta$ is the Dirac delta function.

\subsection{Thin Shell Phenomenon} \label{app:gaussian-annulus}
Assume $X = (X_1, X_2, \ldots, X_D)$ is generated uniformly over the $D$-dimensional hypercube $[-1,1]^D$. Moreover, define the scaled variables $Z$ as $Z = \begin{bmatrix} X_1/\ell_1 & \cdots & X_D/\ell_D \end{bmatrix} = X \sqrt{3/D}$, where the lengthscales have been scaled according to $O(\sqrt{D})$ as suggested in recent works \citep{hvarfner2024vanilla, xu2025standard, papenmeier2025understanding}. Then, we have
\begin{equation*}
    \mathbb{E}\left(X_i^2\right) = \frac{1}{2}\int_{-1}^1 x_i^2 \, dx = \frac{1}{3}, \quad \text{and} \quad
    \textrm{var}\left(X_i^2\right) = \frac{1}{2}\int_{-1}^1 x_i^4 \, dx - \mathbb{E}\left(X_i^2\right)^2 = \frac{4}{45},
\end{equation*}
which gives us
\begin{equation*}
    \mathbb{E}\left(Z_i^2\right) = \frac{1}{D}, \quad \text{and} \quad
    \textrm{var}\left(Z_i^2\right) = \frac{4}{5D^2}.
\end{equation*}
Using the above, we have
\begin{equation*}
    \mathbb{E}\left(\|Z\|^2\right) = \sum_{i=1}^{D} \mathbb{E}\left(Z_i^2\right) = 1, \quad \text{and} \quad
    \mathrm{var}\left(\|Z\|^2\right) = \sum_{i=1}^{D} \textrm{var}\left(Z_i^2\right) = \frac{4}{5D}.
\end{equation*}
Therefore, as $D \rightarrow \infty$, it follows that
\begin{equation*}
    \mathbb{E}\left(\|Z\|^2\right) = 1, \quad \text{and} \quad
    \mathrm{var}\left(\|Z\|^2\right) = 0,
\end{equation*}
meaning $\|Z\|$ converges to 1 (in probability, and almost surely). This phenomenon is also known more broadly as the \say{thin shell phenomenon}, and we refer the reader to \cite{vershynin2025high} for a more complete treatment of the topic.

\section{Extended Related Work} \label{app:hdbo}
\paragraph{Low-Dimensional Subspaces}
Instead of searching in a high-dimensional space, a major stream of HDBO methods instead map $\mathcal{X}$ to a low-dimensional subspace, on which the search is more feasible. This mapping can be: \textit{(i) linear}, with methods such as \texttt{REMBO} \citep{wang2016bayesian}, \texttt{HeSBO} \citep{nayebi2019framework}, \texttt{ALEBO} \citep{letham2020re}, \texttt{BAxUS} \citep{papenmeier2022increasing}, and \texttt{Bounce}  \citep{papenmeier2023bounce}, or \textit{(ii) non-linear}, with methods such as \texttt{DMGPC-BO} \citep{moriconi2020high} and \texttt{LOL-BO} \citep{maus2022local}. Additionally, the linear embeddings can be focused on \textit{variable selection} only, as done in \texttt{Dropout-Mix} \citep{li2017high}, \texttt{SAASBO} \citep{eriksson2021high}, \texttt{MCTS-VS} \citep{song2022monte}, and \texttt{GTBO} \citep{hellsten2025leveraging}.

\paragraph{Local Search} Alternatively, instead of performing global search in a low-dimensional subspace, another stream of methods instead perform local search in the full-dimensional space. This can be done using: \textit{(i) trust regions}, such as \texttt{TuRBO} \citep{eriksson2019scalable}, \texttt{CASMOPOLITAN} \citep{wan2021think}, and \texttt{CTS-TuRBO} \citep{rashidi2024cylindrical}, or \textit{(ii) approximate gradients}, such as \texttt{GIBO} \citep{muller2021local, wu2023behavior}, \texttt{MPD} \citep{nguyen2022local}, and \texttt{LA-MinUCB} \citep{fan2024minimizing}.

\paragraph{Additive Decompositions} Rather than change the search space, additive-decomposition methods impose simplifying assumptions on the structure of the objective function. Examples include \texttt{Add-GP-UCB} \citep{kandasamy2015high}, \texttt{G-Add-GP-UCB} \citep{rolland2018high}, \texttt{Tree-GP-UCB} \citep{han2021high}, and \texttt{RDUCB} \citep{ziomek2023random}.

\section{Implementation Details} \label{app:implementation-details}
We run \texttt{EULBO} \citep{maus2024approximation} and \texttt{BAxUS} \citep{papenmeier2022increasing} using the authors' original implementations. For \texttt{TuRBO} \citep{eriksson2019scalable}, \texttt{SAASBO} \citep{eriksson2021high}, and Vanilla BO \citep{hvarfner2024vanilla}, we use implementations present in \texttt{BoTorch} \citep{balandat2020botorch}. For \texttt{BOCK} \citep{oh2018bock}, we replicate the authors' original implementation as closely as possible using the provided kernel in \texttt{GPyTorch} \citep{gardner2018gpytorch}, allowing for an easy integration with \texttt{BoTorch}.

Due to the high computational costs associated with \texttt{SAASBO}, we only run the model for 500 iterations, whereas all other methods are run for 1000 iterations. We run each method for at least 10 different random seeds, and the plots in this paper display the mean and standard error of the mean (with respect to the seeds) as a solid line and shaded area, respectively. We run our linear kernel as well other baselines on a cluster containing both CPUs and GPUs, where the latter were used to speed up computation for slower methods or datasets. Vanilla BO and our proposed linear kernel delivered similar run times ($\pm10\%$): given an evaluation budget of $N = 1000$, the pipeline takes between 1h and 12h on 1 NVIDIA GPU (e.g.\ L40S) depending on the benchmark. For the molecular experiments with $N>~$20,000 (on a similar GPU), the pipeline takes between four and six days.

\section{Additional Ablation Studies}

\subsection{Effect of Hyperprior} \label{app:hyperpriors}
As mentioned in \Cref{sec:prop-lin-model}, performance of our linear model is largely unaffected by the hyperprior placed on the lengthscales. In \Cref{fig:hyperpriors}, we evaluate three different hyperpriors:
\begin{enumerate}
\itemsep0em
    \item \texttt{Gamma}: $\ell_i \sim \Gamma(3,6)$, which used to be the default in \texttt{BoTorch} \citep{hvarfner2024vanilla},
    \item \texttt{LogNormal}: $\ell_i \sim \mathcal{LN}\left(\sqrt{2} + \frac{\log(D)}{2}, \sqrt{3}\right)$, which is used in Vanilla BO (often referred to as \texttt{DSP}), and
    \item \texttt{LogNormal}*: $\ell_i \sim \mathcal{LN}\left(\sqrt{2} + \frac{\log(1)}{2}, \sqrt{3}\right)$, which is equivalent to \texttt{DSP} without the dimensionality $D$, since scaling by $D$ is not needed when combined with our global lengthscale $a$ (see \Cref{sec:prop-lin-model}).
\end{enumerate}
As can be seen, using \texttt{Gamma} instead of \texttt{LogNormal}* leads to identical performance for our linear model (orange and red lines), whereas it results in large differences when applied to standard RBF kernels (green and purple lines). In the following section, we discuss how our proposed spherical projection and global lengthscale can be combined with RBF kernels (brown and pink lines).

\begin{figure*}[h!]
    \centering
    \includegraphics[width=\textwidth]{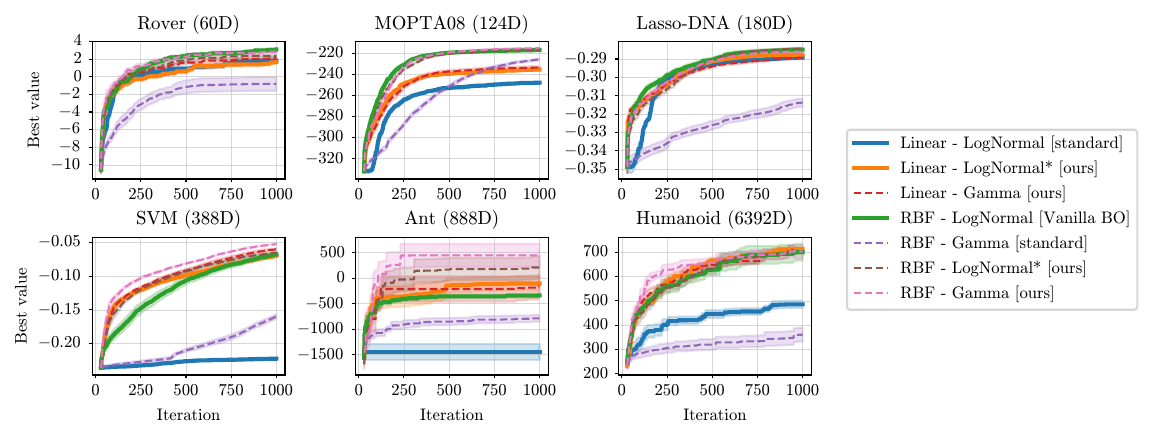}
    \caption{When applied to our spherical mapping with global lengthscale, both linear and RBF kernels are robust to the choice of (lengthscale) hyperprior, and match the performance of Vanilla BO.}
    \label{fig:hyperpriors}
\end{figure*}

\subsection{Effect of Our Spherical Mapping on an RBF Kernel} \label{app:rbf-sphere}
In \Cref{sec:ablations}, we claim that the inverse stereographic projection has almost no effect for non-parametric models. To support this claim, we evaluate the RBF kernel on points projected to the unit hypersphere; see the left-hand side of \Cref{eq:taylor-exp} for a more explicit formulation. In \Cref{fig:hyperpriors}, this RBF kernel applied to the hypersphere (brown line) leads to similar performance as the RBF kernel from Vanilla BO (green line), supporting our claim above. Moreover, replacing the \texttt{LogNormal} prior from Vanilla BO by the \texttt{Gamma} prior (pink line) does not degrade performance for our model (thanks to our projection and global lengthscale $a$), whereas it does lead to significantly worse results for the standard RBF kernel \citep[as presented in][]{hvarfner2024vanilla}.

\subsection{Effects of Projection, Centering, ARD, and Global Lengthscale} \label{app:other-abl.}
In \Cref{fig:ablations}, we modify certain components of our model proposed in \Cref{sec:prop-lin-model} and assess their impact on performance. We consider four modifications: \emph{(i)} no spherical projection $P$ (i.e.\ $P$ is an identity map), \emph{(ii)} a non-centered search space $[0,1]^D$ (instead of $[-1, 1]^D$), \emph{(iii)} no Automatic Relevance Determination (ARD) for the lengthscales (i.e.\ replacing $\ell_i$ by $\ell$), and \emph{(iv)} removing the global lengthscale $a$ (i.e.\ $a=1$).

\begin{figure*}[h!]
    \centering
    \includegraphics[width=\textwidth]{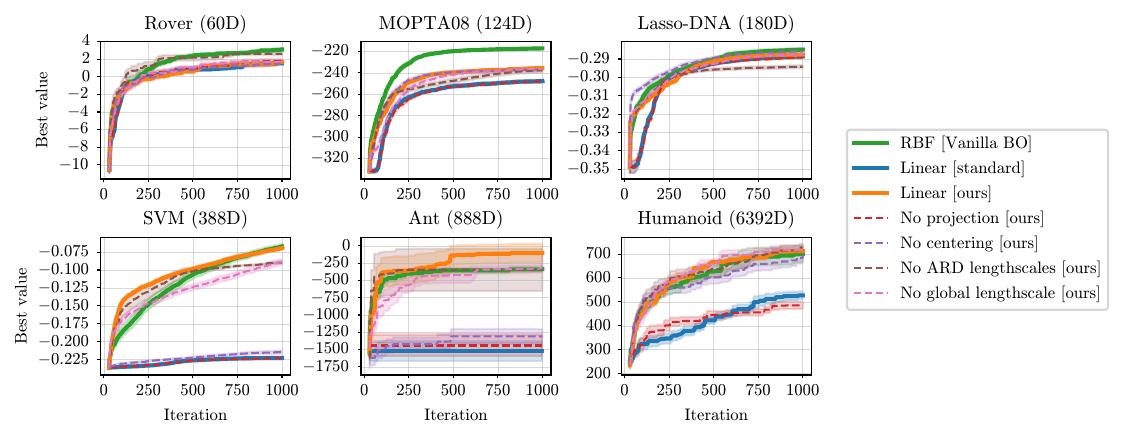}
    \caption{The spherical projection and centering of the hypercube search-space play a key role in our proposed linear model, whereas ARD and our global lengthscale only provide modest improvements.}
    \label{fig:ablations}
\end{figure*}

As confirmed by \Cref{fig:spher-proj-small,fig:spher-proj-full}, the spherical projection is the most important element among our different modifications, and removing it significantly affects performance. Similarly, going from a centered hypercube $[-1,1]^D$ to a non-centered hypercube $[0,1]^D$ leads to a noticeable drop in performance. We hypothesize that this is related to the inverse stereographic projection, where the \say{0-boundary} and the \say{1-boundary} are treated differently if we don't center, but exactly the same if we do. Specifically, \Cref{eq:inv-stereo} heavily relies on the norm of $\mathbf{x}$, and both boundaries will have different norms if no centering is applied. Interestingly, though both our global lengthscale and ARD bring performance improvements, these improvements are rather minor in contrast to centering and spherical projection of the search space.

\subsection{Effect of Acquisition Function}
Our results do not change significantly if we replace EI with UCB (\Cref{fig:ucb}): Vanilla BO and our proposed linear model obtain performance similar to that in \Cref{fig:main-plot}. If we replace EI with TS (\Cref{fig:ts_pathwise}), both methods incur a noticeable performance drop. However, our spherical linear model still matches, and sometimes exceeds, the performance of an RBF GP with TS (i.e.\ Vanilla BO). For the generation of the Thompson samples, we rely on \citet{wilson2020efficiently}, which means we can optimize the acquisition function using gradient-based methods.

\begin{figure*}[h!]
    \centering
    \includegraphics[width=\textwidth]{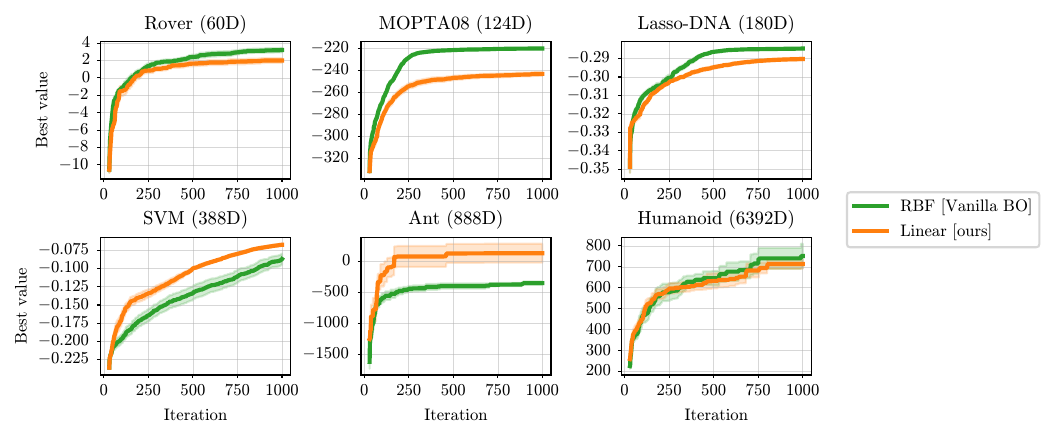}
    \caption{Replacing EI with UCB does not significantly affect performance of Vanilla BO and our proposed linear model.}
    \label{fig:ucb}
\end{figure*}

\begin{figure*}[h!]
    \centering
    \includegraphics[width=\textwidth]{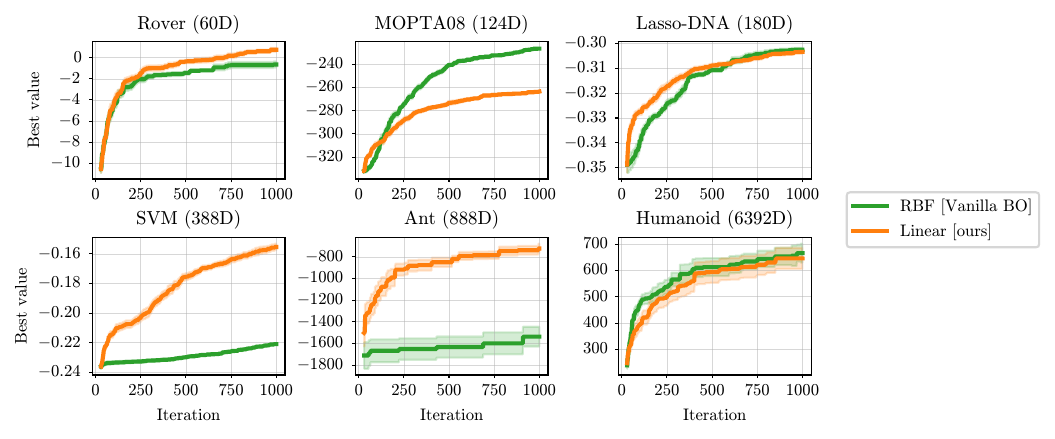}
    \caption{Replacing EI with TS leads to a noticeable performance drop for both methods, but overall our proposed linear model is still on par with (or maybe slightly better than) Vanilla BO.}
    \label{fig:ts_pathwise}
\end{figure*}

\subsection{Effect of Problem Structure}
Here we test our linear models versus Vanilla BO models on synthetic problems with known objective functions. In particular, we aim to assess the relative efficacy of our method as we increase the degree of sparsity and nonlinearity in the objective.

\paragraph{Effect of Sparsity.}
Following the setup in \citet[][Figure~5]{hvarfner2024vanilla}, we embed the \texttt{Levy} ($D=4$) and \texttt{Hartmann} ($D=6$) functions into a range of higher dimensionalities ($D=25$, $D=100$, $D=300$, and $D=1000$), thus leading to objective functions with increasing amounts of sparsity \citep{levy1985tunneling,hartman1973some}. For both test functions, we observe a clear trend across dimensionality (\Cref{fig:synthetic}): as the embedding dimension grows (i.e.\ sparsity increases), the gap between Vanilla BO and our proposed linear model vanishes. From these results, we conclude that our proposed linear model might be more suited to functions with a higher dimensionality or a higher degree of sparsity.

\begin{figure*}[h!]
    \centering
    \includegraphics[width=\textwidth]{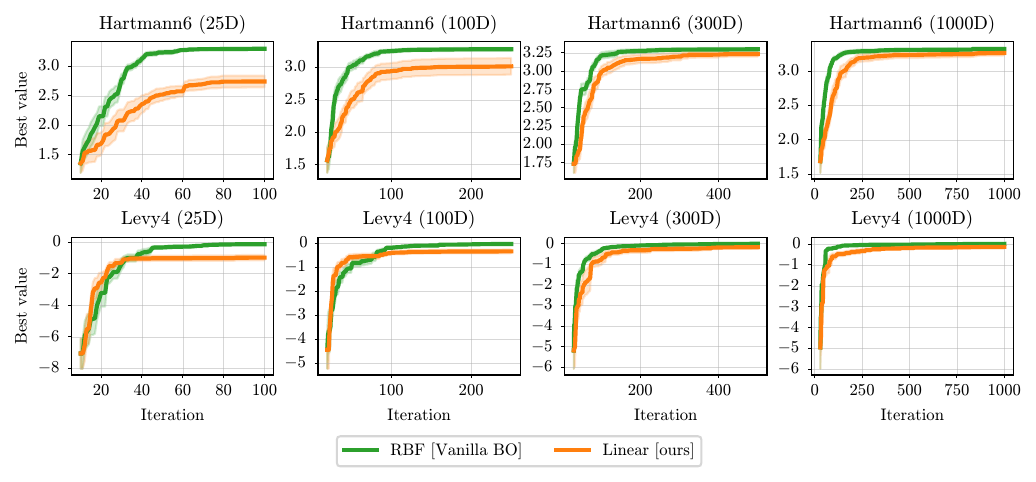}
    \caption{{\bf Impact of objective function sparsity on optimization results.}
    The Hartmann6 (top) and Levy4 (bottom) test functions are embedded in high-dimensional spaces, following the protocol of \citet{hvarfner2024vanilla}.
    Increasing the ambient dimensionality reduces the performance gap between our proposed linear model and Vanilla BO;
    however both methods are comparable in most settings.}
    \label{fig:synthetic}
\end{figure*}

\paragraph{Effect of Non-Linearity in the Objective Function.}
We consider objective functions that are varying convex combinations of linear and non-linear objective functions. Specifically, we draw the objective function from a Gaussian process with the following kernel:
\begin{equation} \label{eq:linear_rbf_mix}
    k_\textrm{test} = \alpha \, k_\textrm{linear} + (1-\alpha) \, k_\textrm{RBF}, \quad \text{with} \quad \alpha \in [0,1].
\end{equation}
Small values of $\alpha$ correspond to highly nonlinear functions; large values correspond to nearly linear functions. In \Cref{fig:linear_rbf_mix}, as $\alpha$ goes from zero to one, the difference between our proposed linear model and Vanilla BO vanishes (30\% gap to 5\% gap). Nevertheless, we observe that our spherical linear model still yields substantial optimization progress even when the objective function is entirely nonlinear ($\alpha=0$).

\begin{figure*}[h!]
    \centering
    \includegraphics[width=\textwidth]{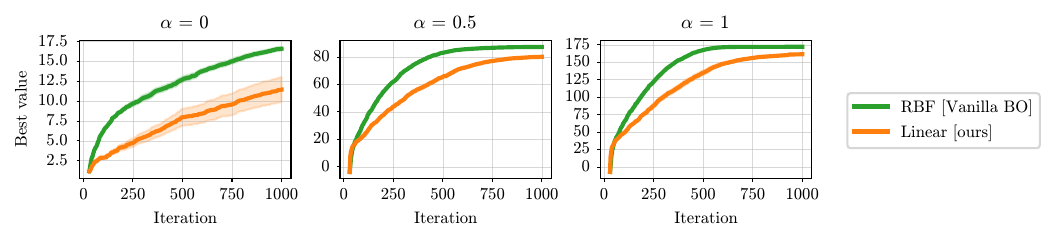}
    \caption{{\bf Impact of objective function nonlinearity on optimization results.}
    The objective function is drawn from a convex combination of linear and non-linear Gaussian processes,
    where $\alpha$ is the combination parameter.
    $\alpha=0$ (left) is entirely non-linear; $\alpha=1$ (right) is entirely linear.
    The difference between our proposed linear model and Vanilla BO decreases as $\alpha \to 1$, though performance is arguably comparable at all combination levels.
    }
    \label{fig:linear_rbf_mix}
\end{figure*}

\section{Complete Experimental Results}

\subsection{\texttt{GuacaMol} Datasets for $N=1,\!000$} \label{app:guacamol}
We compare our linear kernel from \Cref{eq:lin-kernel} against Vanilla BO on nine different tasks from the \texttt{GuacaMol} benchmark \citep{brown2019guacamol}. Specifically, we run both methods in the 256D latent space of the \texttt{SELFIES-VAE} \citep{maus2022local}, and plot the results in \Cref{fig:guacamol-1k}. On all nine datasets, our linear model significantly outperforms Vanilla BO, potentially indicating (a part of) its design is better suited for (molecular) latent spaces. We leave more thorough analysis of this strong performance to further work, but believe it to be a fruitful direction.

\begin{figure*}[h!]
    \centering
    \includegraphics[width=\textwidth]{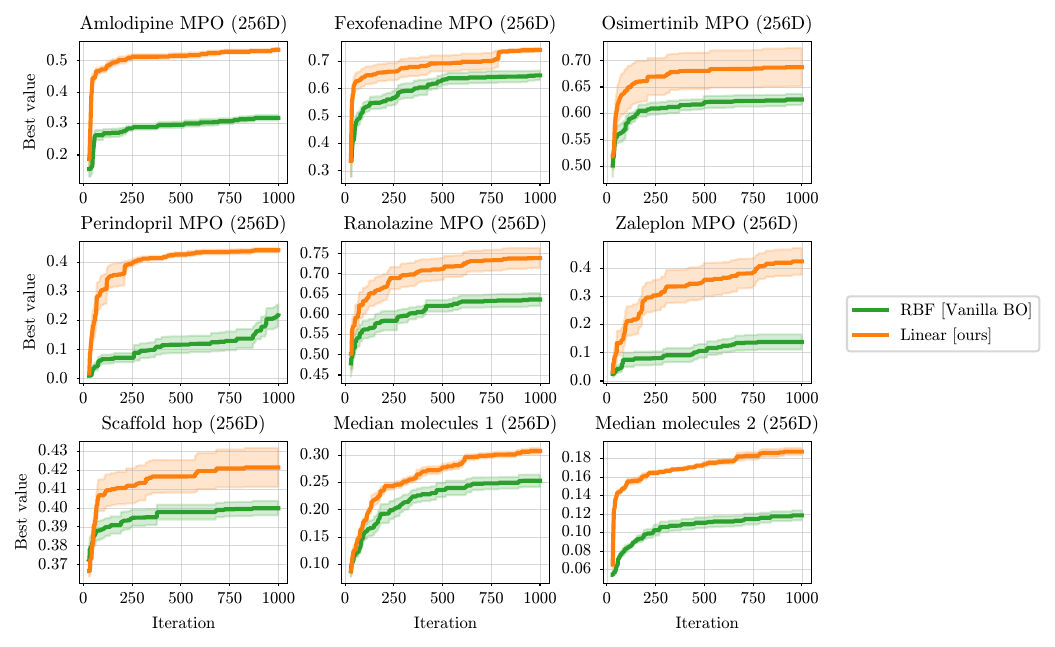}
    \caption{Our proposed linear kernel from \Cref{eq:lin-kernel} significantly outperforms Vanilla BO on molecular tasks from the \texttt{GuacaMol} benchmark \citep{brown2019guacamol}, where BO takes place in the 256D latent space of the \texttt{SELFIES-VAE} \citep{maus2022local}.}
    \label{fig:guacamol-1k}
\end{figure*}

\subsection{Higher-Order Polynomials} \label{app:poly}
We replicate the results of \Cref{fig:poly-small} in \Cref{fig:poly-full}, where the latter now contains six datasets instead of two. The conclusions for this extended figure are the same as in \Cref{sec:ablations}: higher-order polynomials ($m>1$) from \Cref{eq:poly-kernel} all obtain near-identical performance compared to the linear kernel ($m=1$). The only (minor) exception to this rule is the \texttt{MOPTA08} dataset, where higher-order polynomials slightly outperform the linear kernel.

\begin{figure*}[h!]
    \centering
    \includegraphics[width=\textwidth]{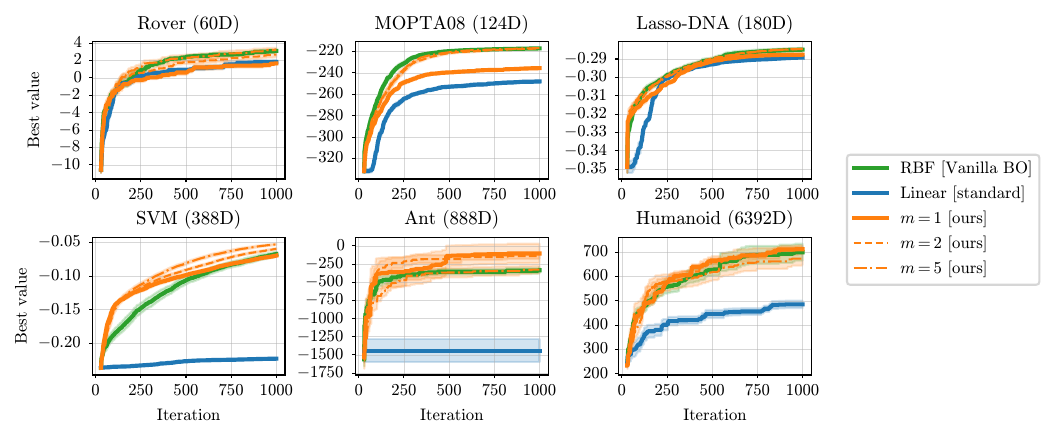}
    \caption{Extension of \Cref{fig:poly-small} (high-order polynomials do not improve upon our linear model).}
    \label{fig:poly-full}
\end{figure*}

\subsection{Choice of Spherical Mapping} \label{app:spher-proj}
We extend the findings from \Cref{fig:spher-proj-small} in \Cref{fig:spher-proj-full}, which expands the analysis to encompass six datasets rather than two. In both figures, we rely on our linear kernel from \Cref{eq:lin-kernel}, and only modify the spherical projection $P$. Here, \texttt{None} means no projection is used (i.e.\ $P$ is an identity map), the \texttt{Inverse}~\texttt{stereographic} projection is defined in \Cref{eq:inv-stereo}, and the remaining projections can be written as:
\begin{align*}
    P_\textrm{radial}(\mathbf{z}) &= \tfrac{1}{\gamma} \begin{bmatrix} z_1 & \cdots  & z_D & \sqrt{\gamma^2 - ||\mathbf{z}||^2} \end{bmatrix}, \\
    P_\textrm{norm.}(\mathbf{z}) &= \tfrac{1}{||\mathbf{z}||} \begin{bmatrix} z_1 & \cdots  & z_D \end{bmatrix}, \\
    P_\textrm{homo.}(\mathbf{z}) &= \tfrac{1}{||\mathbf{z}|| + D \gamma} \begin{bmatrix} z_1 & \cdots  & z_D & \gamma_1 & \cdots & \gamma_D \end{bmatrix}, \\
    P_\textrm{(co)sine}(\mathbf{z}) &= \tfrac{1}{||\mathbf{z}||} \begin{bmatrix} z_1 \cos\left(||\mathbf{z}||\right)& \cdots & z_D \cos\left(||\mathbf{z}||\right) & z_1 \sin\left(||\mathbf{z}||\right)& \cdots & z_D \sin\left(||\mathbf{z}||\right) \end{bmatrix},
\end{align*}
where $\gamma = \gamma_1 = \ldots = \gamma_D$ represents the largest norm possible for $\mathbf{z}$ (given that $\mathbf{z}$ is a scaled version of $\mathbf{x} \in [-1,1]^D$).

\begin{figure*}[h!]
    \centering
    \includegraphics[width=\textwidth]{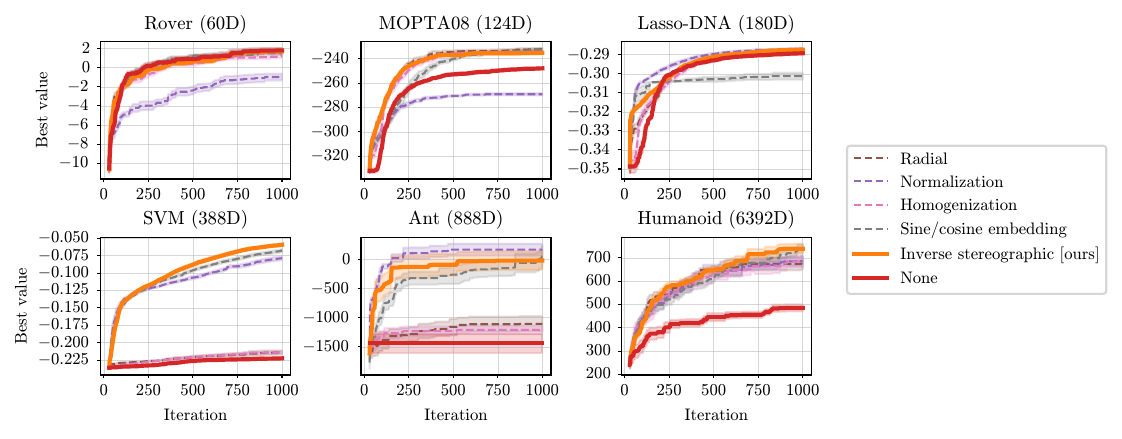}
    \caption{Extension of \Cref{fig:spher-proj-small} (the choice of spherical projection can have a drastic impact on performance).}
    \label{fig:spher-proj-full}
\end{figure*}

This broader comparison of datasets yields consistent conclusions with those presented in \Cref{sec:ablations}: a spherical projection is necessary to improve performance, but the choice of spherical projection can have a drastic impact on performance. For example, simple \texttt{Normalization} achieves state-of-the-art performance on the \texttt{Ant} dataset, but is outperformed by all other projections---including no projection at all---on the \texttt{MOPTA08} dataset. For the final version of our linear kernel, we opt for the \texttt{Inverse}~\texttt{stereographic} projection, which consistently delivers superior performance across datasets.

\subsection{Percentage of Dimensions on the Boundary} \label{app:boundary-plot}
In \Cref{fig:boundary-full}, we complement the results of \Cref{fig:otsd-boundary} by adding four more datasets (for a total of six). As described in \Cref{sec:analysis}, we can clearly observe a corner-seeking behavior for the standard linear kernel, with almost $100\%$ of the dimensions (of acquired points) lying on the boundary. For our linear kernel and Vanilla BO, this is not the case, but we note a surprisingly high diversity of acquired points across datasets: from almost $0\%$ dimensions on the boundary for \texttt{SVM} and \texttt{Ant} to more than $50\%$ for the other four datasets.

\begin{figure*}[h!]
    \centering
    \includegraphics[width=\textwidth]{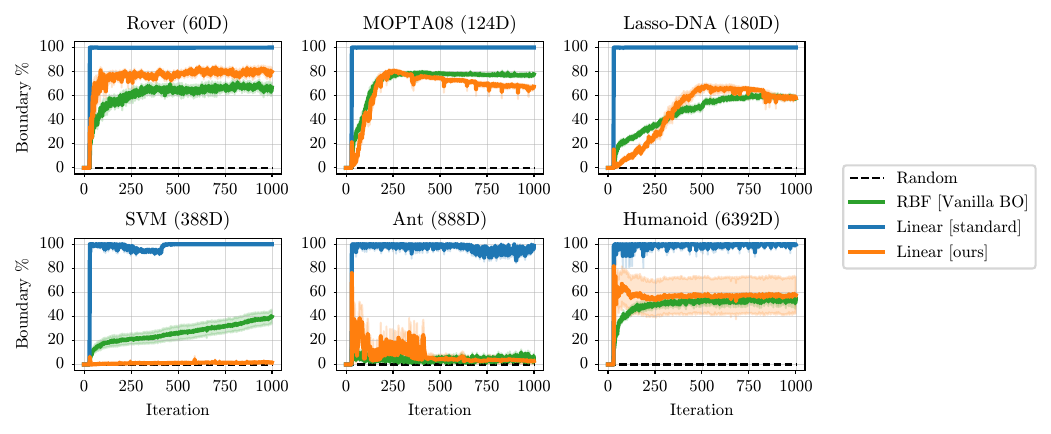}
    \caption{Extension of \Cref{fig:otsd-boundary} (standard linear kernels lead to acquisitions in the corners of the hypercube, whereas our spherically-mapped linear kernels acquire points similar to Vanilla BO).}
    \label{fig:boundary-full}
\end{figure*}

\subsection{Observation Traveling Salesman Distance (OTSD)} \label{app:otsd}

Similar to the above paragraph, we expand the results of \Cref{fig:otsd-boundary} in \Cref{fig:otsd-full} to include four more datasets. Here, we can observe a similar pattern as described in \Cref{sec:analysis}: the corner-seeking behavior of the standard linear kernel leads to an OTSD metric often exceeding that of random search, whereas our spherically-mapped linear kernel displays patterns of \say{locality} closer to those of Vanilla BO. Additionally, we can see that the OTSD metric for the standard linear kernel keeps increasing with dimensionality $D$. This result is not surprising: as $D$ grows large, corners in a high-dimensional hypercube are increasingly far away from each other compared to randomly-sampled points.

\begin{figure*}[h!]
    \centering
    \includegraphics[width=\textwidth]{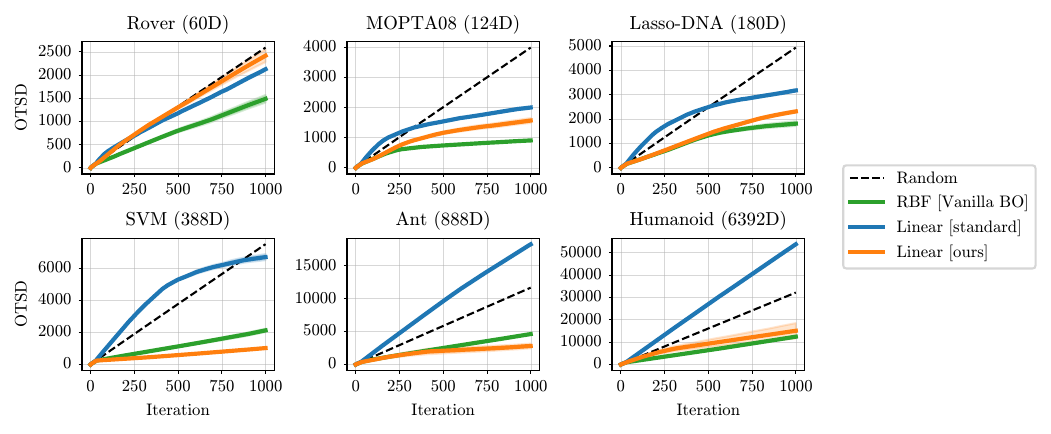}
    \caption{Extension of \Cref{fig:otsd-boundary} (our linear kernel exhibits ``locality'', similar to Vanilla BO and unlike the standard linear kernel).}
    \label{fig:otsd-full}
\end{figure*}

\subsection{Regression Performance} \label{app:boxplot}

In \Cref{fig:boxplot-full}, we add four more datasets (for a total of six) to the Box plots of \Cref{fig:boxplot}. Based on these six datasets, we draw the same conclusions as in \Cref{sec:analysis}: \vspace{-5pt}
\begin{enumerate}
\itemsep0em
    \item the points chosen during the BO loop are easier to predict than evenly-spread Sobol points (perhaps because of the \say{local} behavior of our model), and
    \item our spherically-mapped linear model does not significantly improve the regression performance of the standard linear model (thus discarding the idea that the performance of our linear model is due to strong, non-linear features induced by the spherical projection). \end{enumerate}

\begin{figure*}[h!]
    \centering
    \includegraphics[width=\textwidth]{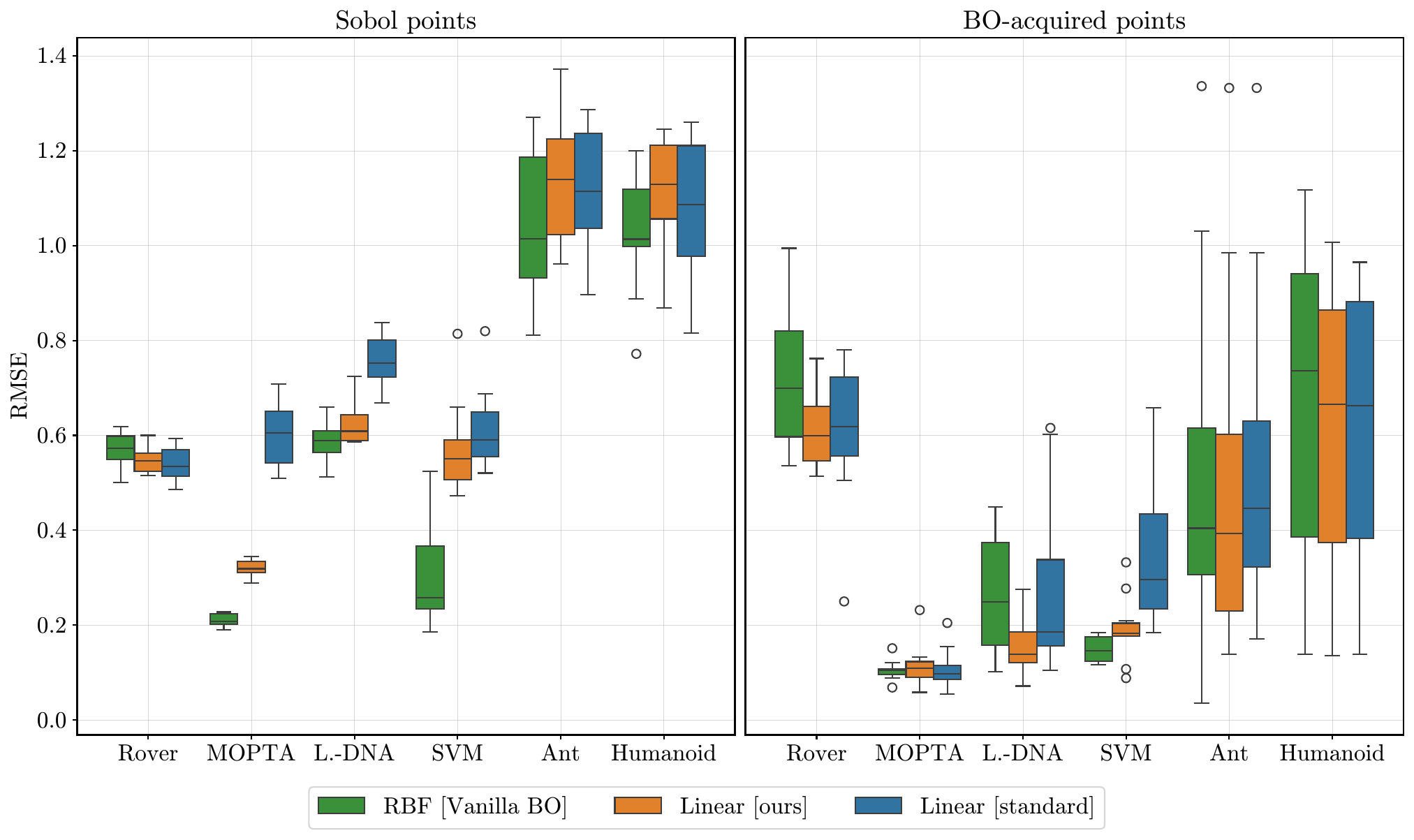}
    \caption{Extension of \Cref{fig:boxplot} (our spherical projection does not lead to better predictive performance for the standard linear kernel).}
    \label{fig:boxplot-full}
\end{figure*}

\end{document}